\documentclass[10pt,twocolumn,letterpaper]{article}

\usepackage{cvpr}              
\definecolor{cvprblue}{rgb}{0.21,0.49,0.74}
\usepackage[pagebackref,breaklinks,colorlinks,allcolors=cvprblue]{hyperref}
\usepackage{times}

\usepackage{amsmath,amsfonts,bm}

\def\eqref#1{equation~\ref{#1}}

\def\1{\bm{1}}

\DeclareMathAlphabet{\mathsfit}{\encodingdefault}{\sfdefault}{m}{sl}
\SetMathAlphabet{\mathsfit}{bold}{\encodingdefault}{\sfdefault}{bx}{n}

\usepackage{kotex}
\usepackage{url}            %
\usepackage{booktabs}       %
\usepackage{nicefrac}       %
\usepackage{microtype}      %
\usepackage[figuresright]{rotating}
\usepackage{amsfonts}
\usepackage{amssymb}
\usepackage{amsthm}
\usepackage{amsmath}
\usepackage{multirow}
\usepackage{array}
\usepackage{graphicx}
\usepackage[table]{xcolor}
\usepackage{arydshln}
\usepackage{siunitx}
\usepackage{subcaption}
\usepackage{tcolorbox}
\usepackage{bbm}
\usepackage[english]{babel}
\newtheorem{theorem}{Theorem}
\newtheorem{proposition}{Proposition}
\newtheorem{corollary}{Corollary}
\newtheorem{lemma}{Lemma}
\usepackage{algorithm}
\usepackage[noend]{algpseudocode} 

\newcommand{\result}[2]{ $\displaystyle #1$ \color{darkgray}{\scriptsize{${#2}$}}}

\algnewcommand{\Parameters}[1]{\State \textbf{Parameters:} #1}

\def\paperID{} 
\def\confName{CVPR}
\def\confYear{2026}

\title{\emph{Evidential Transformation Network}: Turning Pretrained Models into \\Evidential Models for Post-hoc Uncertainty Estimation}

\author{
Yongchan Chun$^{1}$\thanks{This work was done while the author was at Korea University.} \quad
Chanhee Park$^{1}$ \quad
Jeongho Yoon$^{1}$ \quad
Jaehyung Seo$^{2}$\thanks{Corresponding authors.} \quad
Heuiseok Lim$^{1}$\footnotemark[2]\\[0.5em]
$^{1}$Korea University \qquad
$^{2}$Konkuk University\\[0.5em]
{\tt\small \{cyc9805, pch7678, aa007878, limhseok\}@korea.ac.kr, seojae777@konkuk.ac.kr}
}
\begin{document}
\maketitle
\begin{abstract}
Pretrained models have become standard in both vision and language, yet they typically do not provide reliable measures of confidence. Existing uncertainty estimation methods—such as deep ensembles and MC dropout—are often too computationally expensive to deploy in practice. Evidential Deep Learning (EDL) offers a more efficient alternative, but it requires models to be trained to output evidential quantities from the start, which is rarely true for pretrained networks.
To enable EDL-style uncertainty estimation in pretrained models, we propose the Evidential Transformation Network (ETN), a lightweight post-hoc module that converts a pretrained predictor into an evidential model. ETN operates in logit space: it learns a sample-dependent affine transformation of the logits and interprets the transformed outputs as parameters of a Dirichlet distribution for uncertainty estimation.
We evaluate ETN on image classification and large language model question-answering benchmarks, under both in-distribution and out-of-distribution settings. ETN consistently improves uncertainty estimation over post-hoc baselines, while preserving accuracy and adding only minimal computational overhead. Our code is available at \url{https://github.com/cyc9805/Evidential-Transformation-Network}.
\end{abstract}

\section{Introduction}
As probabilistic deep learning models and datasets scale to capture increasingly complex patterns, training from scratch has become extremely expensive~\citep{kaplan2020scalinglawsneurallanguage}. Consequently, deep learning community has widely adopted a \emph{pretrain–then–finetune} strategy, adapting publicly available pretrained models to downstream tasks.

While using pretrained deep learning models is both effective and cost-efficient, a key question remains: \emph{to what extent can we trust a model’s prediction?} In other words, we seek to quantify how certain the model is about its output. Uncertainty estimation~\citep{gawlikowski2023survey} addresses this by modeling a \emph{second-order} distribution, i.e, a distribution over the predictive categorical probabilities\cite{juergens2024epistemic,bengs2023secondorderscoringrulesepistemic}.

As constructing the full second-order distribution is intractable, it is typically approximated via methods such as Deep Ensembles~\citep{NIPS2017_9ef2ed4b}, MC Dropout~\citep{pmlr-v48-gal16}, and Laplace Approximation~\citep{daxberger2021laplace}. However, these techniques demand substantial compute—training multiple models, performing many stochastic forward passes, or estimating Hessians of model parameters—making them difficult to deploy with large-scale pretrained models in practice.

\begin{figure}[t]
\centering
\includegraphics[width=\linewidth]{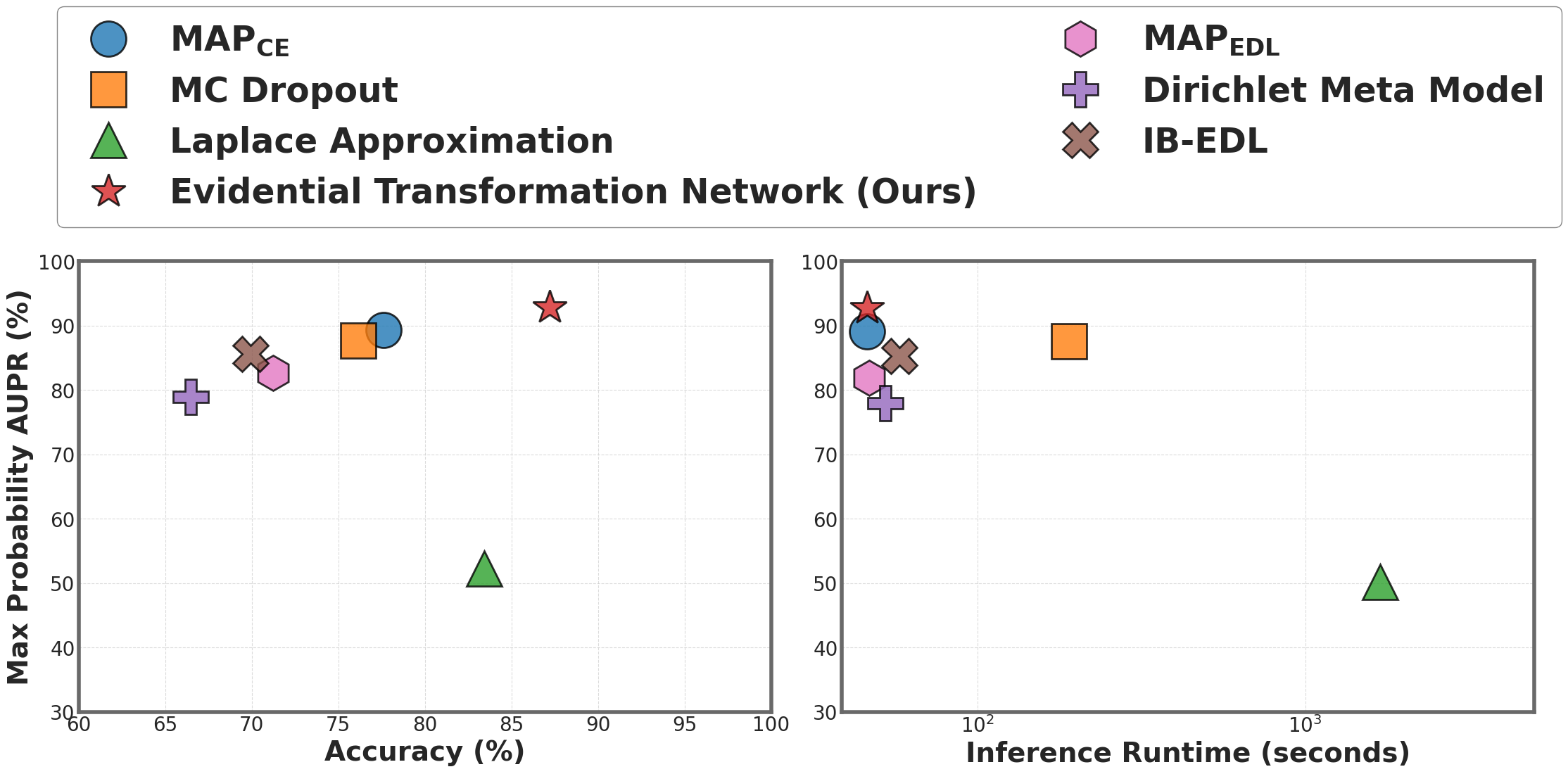}
\caption{
Comparison of average uncertainty estimation performance across in-distribution (ID) and out-of-distribution (OOD) settings. For ID, we score AUPR for correctly identifying confident predictions, while for OOD, AUPR for separating ID from OOD samples by confidence is scored.
Plots show performance against \textbf{accuracy} (left) and \textbf{inference runtime} (right).
\textbf{Evidential Transformation Network} achieves the best uncertainty performance with almost no additional inference cost and no accuracy drop.}
\label{fig:frontier}
\end{figure}


\emph{Evidential Deep Learning} (EDL)~\citep{sensoy2018evidentialdeeplearningquantify} reduces the cost of uncertainty estimation by modeling the second-order distribution as a Dirichlet distribution and training the model to predict its parameters directly. As this requires no additional uncertainty-specific components, EDL adds no inference-time overhead, enabling lightweight uncertainty estimation.

In this work, to exploit the advantages of EDL, we aim to \emph{transform} standard pretrained models into pretrained EDL models.
Since only a small dataset is typically available for such post-hoc adaptation, naïve fine-tuning risks overfitting, potentially degrading the pretrained representations.
To avoid this, we seek a transformation that minimally disturbs the model’s learned representations.
Specifically, we operate in the \emph{logit space}, applying an affine transformation to the logits using a \emph{transformation parameter}, such that the transformed logits can serve as valid Dirichlet parameters for EDL-style uncertainty estimation.

At first glance, our approach resembles Platt scaling~\citep{platt1999probabilistic,niculescu2005predicting,guo2017calibration}, which learns parameters to calibrate logits into well-calibrated probabilities.
However, unlike standard Platt scaling, which uses a single static scaling factor for all samples, our approach employs \emph{sample-dependent} parameters that adapt to each input.
We further motivate and describe this adaptive transformation in a later section.

Based on this idea, we introduce \textbf{Evidential Transformation Network (ETN)}—a lightweight module that generates sample-specific transformation parameters to convert pretrained models into evidential ones. We apply ETN to both image classification models and large language models (LLMs), successfully turning them into evidential models that produce more reliable uncertainty estimates without sacrificing accuracy and inference cost (see Figure~\ref{fig:frontier}).

In summary, our contributions are:
\begin{itemize}
\item We propose \textbf{Evidential Transformation Network (ETN)}, a simple, lightweight module that converts pretrained \emph{deep learning} models into pretrained \emph{evidential deep learning} models. ETN requires only a small dataset for training compared to the large datasets used for training the pretrained models.
\item We demonstrate the effectiveness of ETN on both image classification and LLM question-answering (QA) tasks, achieving better uncertainty estimation and lower inference cost than existing post-hoc uncertainty estimation methods, while preserving accuracy.
\end{itemize}

\section{Background}
We briefly review the foundations of \emph{Evidential Deep Learning} (EDL) and recent extensions in this line of work. In addition, since our approach adapts evidential modeling in a post-hoc manner, we also discuss its connection to \emph{post-hoc uncertainty estimation}.

\paragraph{Evidential Deep Learning.}
EDL extends Subjective Logic~\citep{10.5555/3031657}, which expresses subjective opinions through probabilistic logic, to deep learning models. As Subjective Logic provides theorem that there is a \emph{bijectivity between subjective opinions and Dirichlet PDFs}, EDL aims for deep learning models to directly model a Dirichlet distribution over categorical outcomes~\citep{sensoy2018evidentialdeeplearningquantify}. This enables a single forward pass to capture aleatoric and epistemic uncertainty, without requiring sampling or ensembles.

Since its introduction, several extensions have addressed EDL’s limitations. PriorNet~\citep{malinin2018predictive} interprets EDL in a Bayesian framework, decomposing predictive uncertainty into aleatoric, epistemic, and distributional components. 
R-EDL~\citep{chen2024r} identifies the limitation of the overly strict loss formulation in standard EDL and introduces a relaxed objective to improve flexibility. DAEDL~\citep{yoon2024uncertainty} further addresses the inability of standard EDL to reflect the distance between training and test samples, incorporating feature density into the prediction stage for more faithful uncertainty estimates. Finally, IB-EDL~\citep{li2025calibrating} reformulates EDL through the information bottleneck principle to mitigate overconfidence and extends the framework to LLMs.

\paragraph{Post-hoc Uncertainty Estimation.}
To avoid the computational overhead of retraining, post-hoc methods estimate uncertainty from pretrained models~\citep{gawlikowski2023survey}. A common approach is the Laplace approximation, which approximates the model distribution as Gaussian~\citep{daxberger2021laplace}. However, it requires computing the Hessian of model parameters, which limits scalability due to high computational cost. Alternatively, the Dirichlet Meta Model (DMM)~\citep{shen2022posthocuncertaintylearningusing} trains a meta-model that models a Dirichlet prior over softmax outputs using hidden representations from all layers. Despite its effectiveness, DMM relies on access to the original training data and scales with both the depth and dimensionality of the base model. This poses serious challenges for large pretrained models, where the original training data may be unavailable and the resulting DMM can become prohibitively large.




\section{Preliminaries and Problem Statement}
We first introduce the notation and basic setup of EDL, then describe our approach for effectively transforming standard pretrained models into EDL models.
\subsection{Setup and Notation}
\label{sec:problem}
We study multiclass classification on $\mathcal{X}\times\mathcal{Y}$, where $\mathcal{X}$ is the input space and $\mathcal{Y}=\{1,\dots,C\}$ with $C\ge 2$. A classifier $\theta:\mathcal{X}\to\mathcal{Y}$ is composed of a feature extractor $\phi:\mathcal{X}\to\mathbb{R}^{D}$ (with hidden dimension $D$), a classification head $h:\mathbb{R}^{D}\to\mathcal{Y}$.
We treat $\theta$ as a pretrained model trained on a large dataset $\mathcal{D}=\{(x_i,y_i)\}_{i=1}^{N}$. Also, we denote the logit vector generated by the model $\bm z \;=\; (z_1,\dots,z_C)^\top \in \mathbb{R}^C$. 

\subsection{Evidential Deep Learning}
\label{sec:edl}
The key idea of EDL is to build a Dirichlet distribution $\mathrm{Dir}(\bm{\pi} \mid \bm{\alpha})$, where $\bm{\pi} \in \Delta^{C-1}$ represents the categorical probability vector over $C$ classes. Then, the predicted probability is constructed as the expectation of categorical distribution with respect to Dirichlet distribution, i.e., $p=\mathbb{E}_{\mathrm{Dir}(\bm{\pi}\mid \bm{\alpha})}[p(y\mid \bm{\pi})]$. EDL forms Dirichlet parameters $\bm\alpha$ with logit, where $\bm{\alpha}=(\alpha_1,\dots,\alpha_C)^\top$ is induced by $\bm \alpha=f(\bm z)+\bm b$, with $f$ being a monotonically increasing non-negative function, such as \emph{ReLU}, \emph{softplus} and \emph{exponential function}. $\bm b$ is a  prior belief term, which is usually set to $\bm b=\bm 1_{C}$~\citep{sensoy2018evidentialdeeplearningquantify,malinin2018predictive,charpentier2020posterior,deng2023uncertaintyestimationfisherinformationbased}. The overall concentration of the Dirichlet distribution is defined as $\alpha_0 = \sum_{i=1}^{C} \alpha_i$, which captures the model's total evidence or confidence about its prediction.
The optimization of EDL is done through various loss, including sum of squares (MSE)~\citep{sensoy2018evidentialdeeplearningquantify,deng2023uncertaintyestimationfisherinformationbased,chen2024r,li2025calibrating}, Type-2 maximum likelihood~\citep{sensoy2018evidentialdeeplearningquantify} and KL divergence matching~\citep{malinin2018predictive, chen2018variational, charpentier2020posterior, joo2020being}. 
\citet{shen2024uncertainty} provides a unifying view of these framework, where the objective is unified into\footnote{We omit the OOD term from the original formulation as this term require access to an external OOD dataset or a separate density model (e.g., a normalizing flow), introducing additional training complexity and dependencies.}:
\begin{equation}
\begin{aligned}
\mathcal{L}_{\text{EDL}}(\theta) ={\mathbb{E}_{(x,y)\in\mathcal{D}}\!\left[    D_{\text{KL}}\!\left( p^{(\nu)}(\bm{\pi} \mid y),\, p_\theta(\bm{\pi} \mid x) \right)  \right]}
\end{aligned}
\label{eq:unified_objective}
\end{equation}
where $D_{\text{KL}}(\cdot,\cdot)$ is a KL-Divergence of either a forward~\citep{malinin2018predictive,NEURIPS2019_7dd2ae7d} or reverse~\citep{sensoy2018evidentialdeeplearningquantify,joo2020being,NEURIPS2019_7dd2ae7d}. $p^{(\nu)}(\bm{\pi}\mid y)$ is a Dirichlet distribution conditioned on $y$, defined as 
$p^{(\nu)}(\bm{\pi}\mid y)=\mathrm{Dir}(\bm \pi \mid \bm \alpha^y)$ with $\bm \alpha^y= \bm1_C+(\nu-1)\bm{e}_y$, where $\bm{e}_y$ is a one-hot vector placed on the answer label $y$. As for $\nu$, it is known that increasing this value is beneficial~\citep{shen2024uncertainty}, as it can learn to collect evidence that leads to its prediction better for in-distribution (ID) dataset. Meanwhile, $p_\theta(\bm{\pi} \mid x)$ is an Dirichlet distribution formed by the model, defined as $p_\theta(\bm{\pi} \mid x)=\mathrm{Dir}(\bm{\pi}\mid\bm{\alpha})$, with $\bm \alpha=f(\bm z)+\bm b$.

Once the Dirichlet distribution $\mathrm{Dir}(\bm{\pi}\mid\bm{\alpha})$ is formed, uncertainty estimation is done in Bayesian sense. More specifically, by treating the prediction probability $p(y\mid x)$ as marginal probability  $p(y\mid x)=\int p(y\mid x,\bm{\pi})\mathrm{Dir}(\bm{\pi}\mid \bm{\alpha})d\bm{\pi}$, we can utilize following distributional uncertainty estimates to discriminate OOD samples:

\begin{enumerate}
    \item \textbf{Mutual Information:} $I(y;p) \\ \approx \underbrace{H[\mathbb{E}_{\mathrm{Dir}(\bm\pi\mid \bm\alpha)}[p(y\mid \bm\pi)]]}_{\text{Entropy of expectation}}-\underbrace{\mathbb{E}_{\mathrm{Dir}(\bm\pi\mid \bm\alpha)}[H[p(y\mid \bm\pi)]]}_{\text{Expectation of entropy}}$
    \item \textbf{Differential Entropy:} $h[\mathrm{Dir}(\bm{\pi}\mid \bm{\alpha})]$
\end{enumerate}

Along with these distributional uncertainty estimates, max probability $p_\text{max}=\max \{p_1,\dots,p_C\}$~\citep{hendrycks2016baseline} and concentration $\alpha_0$ is also used for total uncertainty estimates~\citep{sensoy2018evidentialdeeplearningquantify}.




\subsection{Post-hoc Evidential Deep Learning} 
\label{sec:post_hoc_edl}
Conventional Bayesian methods for uncertainty estimation are often computationally intractable or prohibitively expensive, especially for large pretrained models.
Adapting EDL for post-hoc uncertainty estimation therefore offers a practical and efficient alternative.

A straightforward solution is to fine-tune the pretrained model with $\mathcal{L}_{\text{EDL}}$. However, since we are highly likely to only have access to the dataset $\mathcal{D'}=\{(x_i,y_i)\}_{i=1}^{N'}$ with much smaller size than that of dataset used for pretraining, i.e, $N'\ll N$, such fine-tuning risks (1) overfitting to $\mathcal{D'}$ and (2) degrading predictive accuracy. 

To avoid these issues, we perform adaptation entirely in the logit space. This choice offers several key advantages: \textbf{(1) Preservation of representations}: Adjusting logits allows us to change the model’s predictive behavior without disturbing its learned feature space. \textbf{(2) Low computational cost}: Logit-space adaptation is lightweight and operates in post-hoc manner, requiring no gradient updates to the backbone. \textbf{(3) Direct connection to uncertainty modeling}: Since EDL defines Dirichlet parameters as functions of logits, adjusting the logits provides a natural way to shape the resulting Dirichlet distribution, controlling uncertainty without retraining the entire network.

Formally, we express this adaptation as an affine transformation of the output logits $\bm{z}$ via a mapping $\bm z'=A\bm z$, where $A$ denotes the transformation parameter, which may be a scalar, vector, or matrix. We denote all such cases uniformly by $A$ for generality. The transformed logits are then used to form evidential parameters $\bm\alpha'=f(\bm z')+\bm b$, yielding a Dirichlet distribution $\mathrm{Dir}(\bm{\pi}\mid\bm{\alpha}')$. The transformed predictive probability is then computed as $p'=\mathbb{E}_{\mathrm{Dir}(\bm{\pi}\mid\bm{\alpha}')}[p(y\mid \bm{\pi})]$.

\subsection{Sample-Dependent Transformation Parameter}
We now turn to learning the optimal transformation parameter \(A\). To understand how \(A\) should be formed, we compare models trained with EDL and those trained with cross-entropy loss \(\mathcal{L}_{\mathrm{CE}}:\mathbb{R}^C\times\mathcal{Y}\to\mathbb{R}\), which is used for most pretrained models (e.g. ImageNet classifier, LLMs). We analyze the difference from two perspectives: (1) the \emph{logit-space} behavior, and (2) a \emph{Bayesian} interpretation.

\paragraph{Logit-space view.} 
In EDL training, the total concentration $\alpha_0$ is explicitly regulated to reflects the model's confidence, as it becomes large for confident ID samples and small for uncertain or OOD inputs~\citep{shen2024uncertainty,juergens2024epistemic}. Since $\bm\alpha$ is derived from the logits through a monotonic mapping $f$, the logit magnitude directly controls the model’s estimated uncertainty.
However, cross-entropy training provides no such constraint, as it minimizes loss without explicitly regulating the scale of logits or their implied uncertainty.
We formalize this distinction as:

\begin{proposition}[Cross-entropy does not identify Dirichlet concentration]
\label{prop:alpha}
Assume the training data are separable and the model has infinite capacity, so that
logits $\bm z$ can be set arbitrarily. Then:
\begin{itemize}
    \item There exists logit $\tilde{\bm z}$ such that
    $\mathcal{L}_{\rm CE}(\tilde{\bm z},y)\to 0$
    while $\tilde\alpha_0<\infty$.
    \item There exists logit $\hat{\bm z}$ such that
    $\mathcal{L}_{\rm CE}(\hat{\bm z},y)\to 0$
    and $\hat\alpha_0\to\infty$.
\end{itemize}
Hence, minimizing cross-entropy alone does not determine the total concentration $\alpha_0$.
\end{proposition}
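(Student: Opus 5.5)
The plan is to construct two explicit families of logits and exploit two facts: the softmax cross-entropy $\mathcal{L}_{\rm CE}(\bm z,y)=-\log\big(e^{z_y}/\sum_j e^{z_j}\big)$ depends only on the logit \emph{differences} $z_y-z_j$, whereas $\alpha_0=\sum_i f(z_i)+\sum_i b_i$ depends on the logit \emph{levels}. By the infinite-capacity and separability assumption, any logit vector (or sequence of them) can be realized on each training point. So it suffices to work pointwise for a fixed label $y$, and to read ``$\to 0$'' as a limit along a sequence of logits indexed by $t\to\infty$.

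\textbf{First item (finite concentration).} I would shift the logits downward while widening the gap. Set $\tilde z_y^{(t)}=0$ and $\tilde z_j^{(t)}=-t$ for $j\neq y$. Then
\[
\mathcal{L}_{\rm CE}(\tilde{\bm z}^{(t)},y)=\log\big(1+(C-1)e^{-t}\big)\to 0 .
\]
For $f=\mathrm{ReLU}$ we get $f(\tilde z_i^{(t)})=0$ for every $i$, so $\tilde\alpha_0=\sum_i b_i=C$ exactly. For $f=\exp$ or softplus, $f(0)$ is a fixed finite number and $f(-t)\to 0$, so $\tilde\alpha_0\to f(0)+C$, which is finite. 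In every case $\tilde\alpha_0$ stays bounded uniformly in $t$. The only property of $f$ needed is that it is finite at a fixed point and bounded as its argument tends to $-\infty$. Monotonicity together with non-negativity gives exactly this: $0\le f(-t)\le f(0)$.

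\textbf{Second item (diverging concentration).} I would add a common offset, using the shift invariance of softmax. Set $\hat{\bm z}^{(t)}=\tilde{\bm z}^{(t)}+s_t\bm 1_C$ with $s_t=2t$. The cross-entropy is unchanged and still tends to $0$. Meanwhile $\hat z_y^{(t)}=2t\to\infty$. Since $\alpha_0\ge f(\hat z_y^{(t)})$, it remains to have $f(u)\to\infty$ as $u\to\infty$. This holds for ReLU, softplus and exp, so $\hat\alpha_0\to\infty$.

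\textbf{Conclusion and main obstacle.} The two sequences achieve the same limiting (zero) cross-entropy, yet one has bounded $\alpha_0$ and the other divergent $\alpha_0$. Hence the CE minimizer does not pin down $\alpha_0$, which proves the proposition. The main obstacle is not computational but in making the assumptions on $f$ precise. The paper only says ``monotonically increasing non-negative,'' and that does not by itself guarantee $f(u)\to\infty$, since $f$ could be bounded. I would therefore state explicitly that $f$ is unbounded above, as for the three cited choices. I would also note that the first item needs only boundedness of $f$ on $(-\infty,0]$, which follows from monotonicity and non-negativity.
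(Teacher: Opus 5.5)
Your proof is correct and follows the paper's argument. The first family $\tilde z_y=0$, $\tilde z_{j\neq y}=-t$ is exactly the paper's, and your second family is a common shift of it, just like the paper's choice $\hat z_y=t$, $\hat z_{j\neq y}=0$ (a shift by $t$ rather than $2t$); your observation that only boundedness of $f$ on $(-\infty,0]$ and unboundedness of $f$ above are needed is a slight sharpening of the paper, which simply asserts $f(-t)\to 0$ and $f(t)\to\infty$ for the cited choices of $f$.
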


See the Supplementary Material for the proof. Proposition~\ref{prop:alpha} implies that a cross-entropy–trained model can exhibit arbitrary \(\alpha_0\) across samples since the loss provides no constraint on its scale. This motivates a need for \emph{sample-dependent} transformation parameter \(A\).

\paragraph{Bayesian view.} EDL can be interpreted as constructing a per-sample posterior Dirichlet distribution~\citep{yoon2024uncertainty}, given by
\begin{equation}
\label{eq:bayesian}
    p(\bm \pi \mid x;\bm\alpha_{\text{post}}) \propto  
    p_\theta(x \mid \bm \pi;\bm\alpha_x)\, p(\bm \pi;\bm\alpha_{\text{prior}})
\end{equation}
where the posterior is $\mathrm{Dir}(\bm\pi\mid\bm\alpha_{\text{post}})$ with $\bm\alpha_{\text{post}}=\bm\alpha_x+\bm\alpha_{\text{prior}}$. Here, $\bm\alpha_x$ (identical to $f(\bm z)$) corresponds to the evidence predicted from the sample, and $\bm\alpha_{\text{prior}}$ (identical to $\bm b$) specifies the prior belief. 

This Bayesian view highlights that EDL models a per-sample \emph{distribution over categorical probabilities}, whereas cross-entropy produces only a single \emph{categorical probability vector} for each sample.
Consequently, $A$ must be \emph{sample-dependent} to capture these per-sample posterior variations.



\section{Evidential Transformation Network}
\subsection{Effective Adaptive Transformation Strategy}
Since the transformation parameter $A$ must depend on each input sample, a natural starting point would be \emph{Adaptive Temperature Scaling} (AdaTS)~\citep{joy2023sample}. AdaTS constructs sample-dependent temperatures using a Gaussian mixture prior, where each Gaussian corresponds to a class.
This prior is learned via a Variational Autoencoder (VAE) and then used by a temperature prediction network to output a single deterministic scaling value.

Although effective, we opt to model a distribution over the $A$ and treat it as the direct prior of the  transformed predictive probability $p'$. This formulation allows $p'$ to be expressed as a marginal over $A$, enabling the model to learn a full variational distribution rather than a deterministic value. This approach allows the transformation to operate within a probabilistic framework, offering greater expressiveness and flexibility in modeling uncertainty.
\subsection{Training Objective}
\paragraph{Optimizing with ELBO.}
Our objective is to optimize the transformed predictive distribution such that it minimizes the EDL loss defined in Equation~\ref{eq:unified_objective}. 

\noindent Starting from the forward KL formulation,
\begin{equation}
\begin{aligned}
\mathcal{L_\text{EDL}}
&=\mathbb{E}_{(x,y)\in\mathcal{D'}}\!\left[
D_{\mathrm{KL}}\!\big(p^{(\nu)}(\bm\pi\mid y)\,\|\,p'(\bm\pi\mid x;\theta)\big)
\right]\\
&=\underbrace{\mathbb{E}_{\mathcal{D'}}\,\mathbb{E}_{p^{(\nu)}(\bm\pi\mid y)}\big[\log p^{(\nu)}(\bm\pi\mid y)\big]}_{\text{const.\ w.r.t.\ }\theta}\\
&\hspace{1em}-\mathbb{E}_{\mathcal{D'}}\,\mathbb{E}_{p^{(\nu)}(\bm\pi\mid y)}\big[\log p'(\bm\pi\mid x;\theta)\big] 
\end{aligned}
\label{eq:etn_loss}
\end{equation}


As the first term is constant with respect to $\theta$, we only consider the second term. 
To make this tractable, we introduce an Evidence Lower Bound (ELBO) to approximate $\log p'(\bm\pi\mid x;\theta)$. Specifically, we define a variational distribution $q_{\theta_{\text{ETN}}}(A\mid x)$ to approximate the true posterior $q_{\theta_{\text{ETN}}}(A\mid \bm \pi, x)$, together with a prior $p(A)$:

\begin{equation}
\begin{aligned}
\log p'(\bm\pi\mid x;\theta) &\ge \mathbb{E}_{q_{\theta_{\text{ETN}}}(A\mid x)}\!\big[\log p'(\bm\pi\mid A,x;\theta)\big]\\
&\hspace{1em}- D_{\mathrm{KL}}\!\big(q_{\theta_{\text{ETN}}}(A\mid  x)\,\|\,p(A)\big)
\end{aligned}
\label{eq:lowerbound}
\end{equation}
Substituting Equation~\ref{eq:lowerbound} into \ref{eq:etn_loss} yields the upper bound on  $\mathcal{L}_\text{EDL}$ and the training loss to minimize:

{\footnotesize
\begin{equation}
\begin{aligned}
\mathcal{L}_{\text{ETN}}(\theta_{\text{ETN}})
&= -\,\mathbb{E}_{\mathcal{D'}}\,
\mathbb{E}_{p^{(\nu)}(\bm\pi\mid y)}\,
\mathbb{E}_{q_{\theta_{\text{ETN}}}(A\mid x)}
\big[\log p'(\bm\pi\mid A,x;\theta)\big] \\
&\quad +\;\lambda\,
\mathbb{E}_{x\sim\mathcal{D'}}\Big[
D_{\mathrm{KL}}\!\big(q_{\theta_{\text{ETN}}}(A\mid x)\,\|\,p(A)\big)\Big] \nonumber
\end{aligned}
\label{eq:etn_loss_final}
\end{equation}
}
\noindent where $\lambda$ is a regularization coefficient that balances the reconstruction and KL terms. With a change of formula, let $p'(\bm\pi\mid A,x;\theta)=\mathrm{Dir}(\bm\pi\mid \bm\alpha')$ with
$\bm\alpha' = f(A\bm z)+\bm b$ and $\alpha'_0=\sum_{i=1}^C \alpha'_i$.
Then
\begin{equation}
\begin{aligned}
\log p'(\bm\pi\mid A,x;\theta)
= \log\frac{\Gamma(\alpha'_0)}{\prod_{i=1}^C\Gamma(\alpha'_i)}
  + \sum_{i=1}^C (\alpha'_i-1)\log \pi_i \nonumber
\end{aligned}
\end{equation}

\noindent Similarly, let $p^{(\nu)}(\bm\pi\mid y)=\mathrm{Dir}(\bm\pi\mid \bm\alpha^y)$ with
$\bm\alpha^y=\bm 1_C+(\nu-1)\,\bm e_y$.
Using $\mathbb{E}_{\mathrm{Dir}(\bm\alpha^y)}[\log \pi_i]=\psi(\alpha^y_i)-\psi(\alpha^y_0)$,
along with Monte-Carlo approximation, we draw $A^{(m)}\sim q_{\theta_{\text{ETN}}}(A\mid x)$, for $m=1,\dots,M$ leading to the final per-sample loss for $(x,y)\in\mathcal{D'}$:
\begin{equation}
\begin{aligned}
\widehat{\mathcal{L}}_{\text{ETN}}(\theta_\text{ETN})
&= -\,\frac{1}{M}\sum_{m=1}^M
\Big[\log\frac{\Gamma(\alpha'^{(m)}_0)}{\prod_{i=1}^C \Gamma(\alpha'^{(m)}_i)}
\\&\quad + \sum_{i=1}^C (\alpha'^{(m)}_i-1)\big(\psi(\alpha^y_i)-\psi(\alpha^y_0)\big)\Big]\\
&\quad +\;\lambda\, D_{\mathrm{KL}}\!\big(q_{\theta_{\text{ETN}}}(A\mid x)\,\|\,p(A)\big)
\end{aligned}
\label{eq:final_loss}
\end{equation}
where $\bm\alpha'^{(m)} = f(A^{(m)}\bm z)+\bm b$.

\paragraph{Training and Inference.}
The Evidential Transformation Network, denoted \(\theta_{\text{ETN}}\), consists of MLP layers that take as input a hidden representation from the base model \(\theta\) and model a variational distribution over the transformation parameters \(q_{\theta_{\text{ETN}}}(A \mid x)\).
Following prior work~\citep{joy2023sample,10.5555/3692070.3693889}, we use the last hidden state of \(\theta\) as the input to \(\theta_{\text{ETN}}\).

In addition, we set the prior term \(\bm b\) as a learnable parameter. This design has two advantages: (i) \emph{Subjective Logic view.} In Subjective Logic, $\bm b$ is originally derived with the multiplication of two factors, base rate \(\bm a\) and prior strength of the opinion \(W\). Many EDL works fix \(a_i=\tfrac{1}{C}\) and \(W=C\), yielding \(\bm b=\bm 1_C\). However, \citet{chen2024r} shows that this hard prior can be suboptimal, and relaxing the prior and setting \(\bm b\) as a hyperparameter leads to improvement. (ii) \emph{Bayesian view.} In the decomposition \(\boldsymbol\alpha_{\text{post}}=\boldsymbol\alpha_x+\boldsymbol\alpha_{\text{prior}}\) (Equation~\ref{eq:bayesian}), fixing \(\bm b=\bm 1_C\) imposes a strong prior. \citet{yoon2024uncertainty} demonstrate that completely ignoring the prior by setting them to zero vector, i.e, \(\bm b=\bm 0_C\) improves evidential modeling. 

We train \(\theta_{\text{ETN}}\) and \(\bm b\) using the objective in Equation~\ref{eq:final_loss}. At inference, we transform original predictive probabilities by marginalizing over \(A\) using a Monte-Carlo approximation:
\begin{align}
    p'=\frac{1}{M}\sum_{m=1}^{M}\mathbb{E}_{\mathrm{Dir}(\bm \pi \mid \bm \alpha'^{(m)})}[p(y\mid \bm \pi)] \label{eq:dirichlet_expectation}
\end{align}
where \(\bm \alpha'^{(m)}=f(A^{(m)}\bm z)+\bm b\) and \(A^{(m)}\sim q_{\theta_{\text{ETN}}}(A\mid x)\). For the training and inference algorithms, as well as the architectural details of \(\theta_{\text{ETN}}\), please refer to the Supplementary Material.

\begin{figure*}[ht]
    \centering
    \begin{minipage}{0.5\textwidth}
        \centering
        \includegraphics[width=\linewidth]{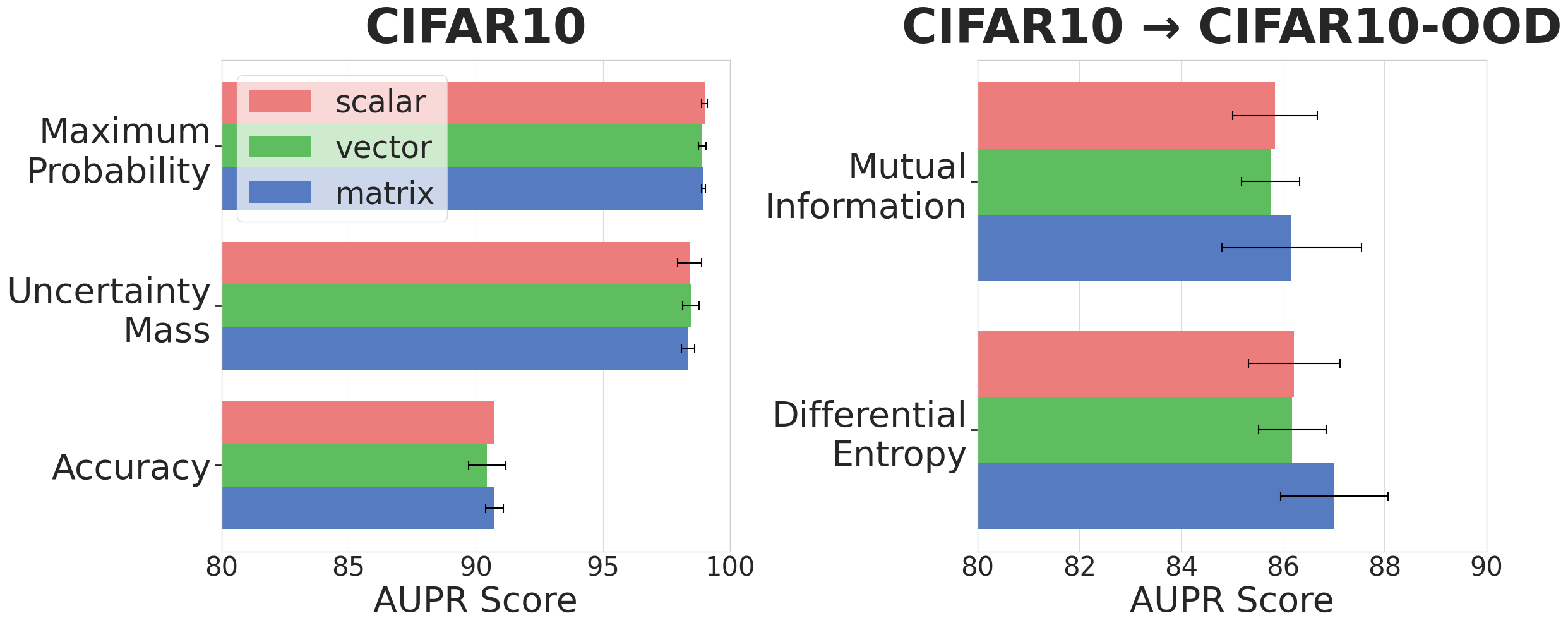}
    \end{minipage}\hfill
    \begin{minipage}{0.5\textwidth}
        \centering
        \includegraphics[width=\linewidth]{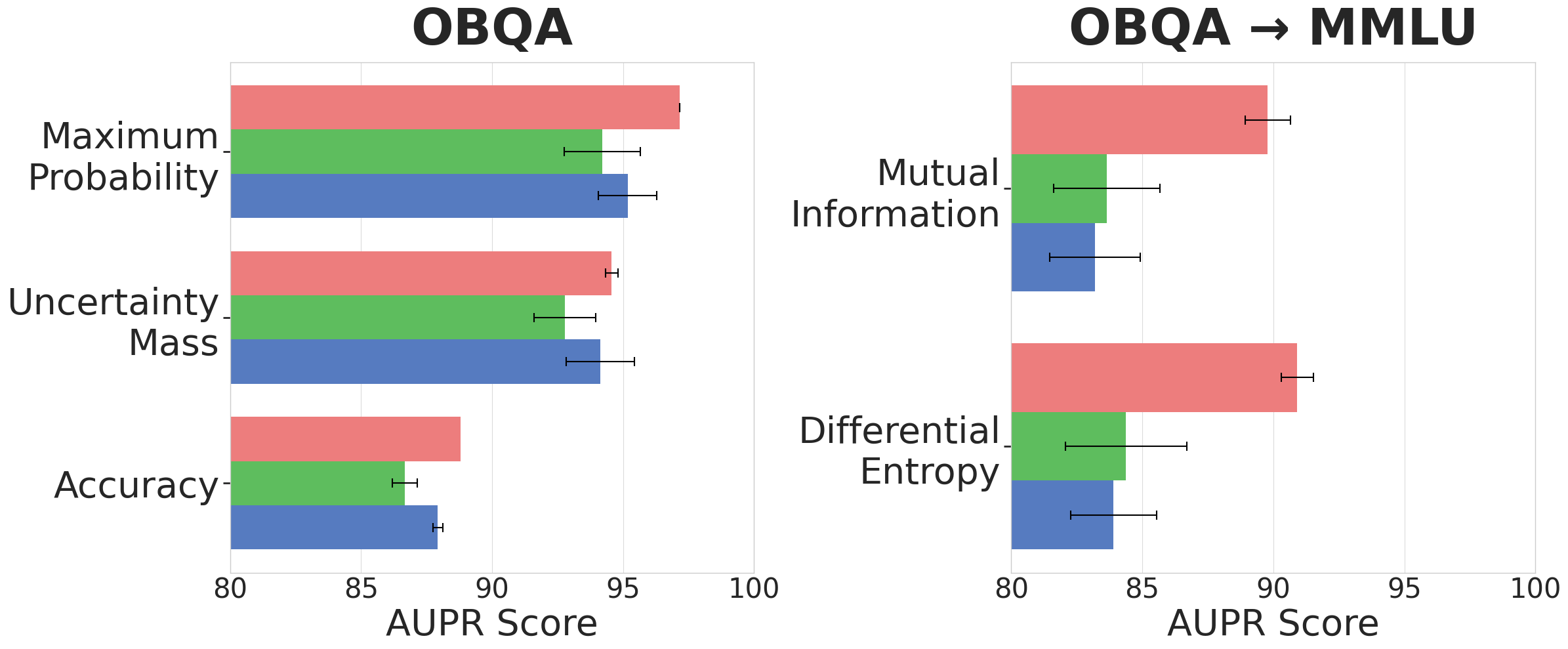}
    \end{minipage}\hfill
    \caption{Comparison of uncertainty estimation performance based on different dimension of transformation parameter $A$.}
    \label{fig:dim}
\end{figure*}

\subsection{Additional Details}\label{sec:add_details}

\paragraph{Using softplus as $f$.} Choosing an appropriate $f$ is crucial. Early EDL works commonly used ReLU, but a plain ReLU is sub-optimal when the predicted evidence collapses to zero (i.e., $f(\bm z)=\bm 0_C$)~\citep{pandey2023learn}. Alternatives such as \emph{softplus}~\citep{deng2023uncertaintyestimationfisherinformationbased} or the \emph{exponential function}~\citep{yoon2024uncertainty} are able to avoid this problem by ensuring gradient to flow across all samples.

In our setting, one might consider the \emph{exponential function} a natural choice for $f$, since the pretrained classifier models probabilities via softmax. 
However, this choice can lead to severe numerical instability. 
As shown in Proposition~\ref{prop:alpha}, some pretrained models may already produce very large logit values.
Applying the \emph{exponential} then amplifies these values, causing the total concentration $\alpha_0$ to grow exponentially. 
Unlike cross-entropy training which avoids explicitly computing $\sum_i^Ce^{z_i}$ by relying on the log-sum-exp trick, no analogous stabilization exists for the EDL objective, where $\alpha_0$ must be computed directly. 
As a result, using the exponential can drive $\alpha_0$ to astronomically large values during initialization or training, leading to numerical overflow and unstable gradients.

We therefore adopt \emph{softplus} as $f$. \emph{softplus} guarantees positivity (preventing the zero-evidence dead zone) yet grows only linearly for large positive inputs and exponentially only for large negative inputs, which naturally relaxes $\alpha_0$ and ensures more stable optimization.

\paragraph{Modeling the Variational Distribution.}  
To preserve the predictive capability of the pretrained model, we restrict the variational distribution to families that enforce monotonic rescaling of logits.
Specifically, we adopt the \emph{Gamma distribution}, whose support lies in the positive real domain.
Detailed modeling choices for different transformation dimensionalities are provided in the Supplementary Material.

\paragraph{Choice of Prior.}
For the prior over the $A$, we again work in the \emph{logit space} to compare models trained with EDL and cross-entropy.
In particular, we view the scale of logits from a \emph{margin} perspective~\citep{suykens1999least,liu2016large}, which focuses on maximizing the separation between class representations.
We formally define the inter-class margin for a sample $(x, y)$ as
\[\gamma(\bm z, y) = z_y - \max_{j \ne y} z_j.\]
With this definition, we can establish a connection between the margins induced by cross-entropy and EDL training.

\begin{theorem}[EDL vs.\ CE margin under equal loss]
\label{thm:equal-ce-margin}
Assume that a CE-trained model and an EDL-trained model yield the same per-sample loss 
\(L := \mathcal{L}_{\mathrm{CE}}(\bm z, y) = \mathcal{L}_{\mathrm{EDL}}(\bm z, y)\).  
Further, assume that for the EDL model there exists \(\eta\) with \(0 \le \eta < \nu - b_y\) such that
\[
\alpha_y \ge \nu - \eta,
\qquad
\alpha_j \le b_j + \eta \quad \forall j \ne y.
\]
Then the probability that the EDL margin exceeds the CE margin under equal loss is bounded by
\begin{equation}
\label{eq:edl_vs_ce_margin}
\begin{aligned}
&P\big(\gamma_{\mathrm{EDL}}(\bm z, y)\ \ge\ \gamma_{\mathrm{CE}}(\bm z, y)\big) \\
&\quad \ge\
P\!\Big(
L
\ge 
\log\!\Big(1 + \tfrac{C - 1}{e^{f^{-1}(\nu - b_y - \eta) - f^{-1}(\eta)}}\Big)
\Big).
\end{aligned}
\end{equation}
\end{theorem}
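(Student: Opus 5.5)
Our plan is to reduce the probabilistic statement to a deterministic implication. We show that, for any realization of the logits, the event $\{L \ge \log(1 + (C-1)e^{-(f^{-1}(\nu-b_y-\eta)-f^{-1}(\eta))})\}$ is contained in the event $\{\gamma_{\mathrm{EDL}} \ge \gamma_{\mathrm{CE}}\}$. Monotonicity of probability then gives \eqref{eq:edl_vs_ce_margin}. So the work consists of a lower bound on the EDL margin, an upper bound on the CE margin, and a comparison of the two through the common loss value $L$.

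\emph{Step 1 (EDL margin from the concentration assumptions).} Since $\bm\alpha = f(\bm z)+\bm b$ and $f$ is monotonically increasing (softplus, hence invertible on its range), the hypotheses translate into logit bounds:
\[
z_y \ge f^{-1}(\nu - b_y - \eta), \qquad z_j \le f^{-1}(\eta) \quad (j\neq y).
\]
Here we use that $\alpha_j \le b_j+\eta$ gives $f(z_j)\le \eta$. The condition $\eta < \nu - b_y$ guarantees that $\nu-b_y-\eta>0$ lies in the range of $f$, so the inverse is well defined. Taking the maximum over $j\neq y$ then yields
\[
\gamma_{\mathrm{EDL}}(\bm z,y) \ge \Delta := f^{-1}(\nu-b_y-\eta) - f^{-1}(\eta).
\]

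\emph{Step 2 (CE margin from the loss value).} For cross-entropy,
\[
L = \log\Big(1+\sum_{j\neq y} e^{-(z_y - z_j)}\Big).
\]
Each term $e^{-(z_y-z_j)}$ is at least $e^{-(z_y-z_{j^*})}$ for $j^*=\arg\max_{j\ne y} z_j$ only in the sense that the sum is bounded below by this single term. We need an upper bound on $\gamma_{\mathrm{CE}}$, so we use the other direction instead. Because $z_y - z_j \ge \gamma_{\mathrm{CE}}$ for all $j\neq y$, we get $\sum_{j\ne y} e^{-(z_y-z_j)} \le (C-1)e^{-\gamma_{\mathrm{CE}}}$. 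Therefore
\[
L \le \log\big(1+(C-1)e^{-\gamma_{\mathrm{CE}}}\big),
\]
which inverts to
\[
\gamma_{\mathrm{CE}} \le \log\frac{C-1}{e^{L}-1}.
\]

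\emph{Step 3 (comparison).} Suppose $L \ge \log(1+(C-1)e^{-\Delta})$. Then $e^L - 1 \ge (C-1)e^{-\Delta}$, so
\[
\gamma_{\mathrm{CE}} \le \log\frac{C-1}{e^L-1} \le \Delta \le \gamma_{\mathrm{EDL}}.
\]
This establishes the event inclusion, and taking probabilities over the data distribution (the randomness in $(x,y)$ and hence in $L$) concludes the proof.

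The main obstacle is Step 2: choosing the correct direction of the bound on the log-sum-exp. Note that the equal-loss assumption is only used to make $L$ a common quantity, and that the EDL margin bound in Step 1 does not depend on $L$ at all. We would also remark that the bound is stated for the case $C\ge 2$ and $e^{L}>1$, which holds for any finite logits.
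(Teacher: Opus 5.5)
Your proposal is correct and follows the paper's proof essentially step for step. It uses the same three ingredients: the EDL margin lower bound $\gamma_{\mathrm{EDL}}\ge f^{-1}(\nu-b_y-\eta)-f^{-1}(\eta)$ (the paper's lemma on the margin of EDL models), the cross-entropy bound $L\le\log(1+(C-1)e^{-\gamma_{\mathrm{CE}}})$, and the pointwise event inclusion followed by monotonicity of probability. Your explicit inversion $\gamma_{\mathrm{CE}}\le\log\frac{C-1}{e^{L}-1}$ is only an equivalent rewriting of the paper's direct comparison of the two logarithms. The discarded aside in your Step 2 is misstated, since each term is at most, not at least, the largest one, but it plays no role in the argument.
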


\begin{corollary}[EDL model using softplus as $f$]
\label{cor:softplus-equal-ce}
Under the conditions of Theorem~\ref{thm:equal-ce-margin}, let $f$ be the softplus function.  
Then the probability that the EDL margin exceeds the CE margin at the same loss is bounded by
\begin{equation}
\label{eq:softplus-edl-lb-equal-ce}
\begin{aligned}
&P\big(\gamma_{\mathrm{EDL}}(\bm z,y)\ \ge\ \gamma_{\mathrm{CE}}(\bm z,y)\big) \\
&\quad \ge\
P\!\Big(
L
\ge 
\log\big(1 + (C - 1)\tfrac{e^{\eta} - 1}{e^{\nu - b_y - \eta} - 1}\big)
\Big).
\end{aligned}
\end{equation}
\end{corollary}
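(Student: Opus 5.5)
The plan is to specialize Theorem~\ref{thm:equal-ce-margin} directly to $f=\mathrm{softplus}$, so that no new probabilistic argument is needed. The hypotheses of the corollary are exactly those of the theorem: equal per-sample loss $L$, and the existence of $\eta$ with $0\le \eta<\nu-b_y$ controlling $\alpha_y$ and $\alpha_j$. Softplus is monotonically increasing and non-negative, so it is an admissible choice of $f$ in the sense of Section~\ref{sec:edl}. Hence the lower bound \eqref{eq:edl_vs_ce_margin} holds verbatim. What remains is to rewrite the threshold
\[
\log\!\Big(1+\tfrac{C-1}{e^{f^{-1}(\nu-b_y-\eta)-f^{-1}(\eta)}}\Big)
\]
in closed form.

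The key computation is the inverse of softplus. Since $f(x)=\log(1+e^{x})$ is a strictly increasing bijection $\mathbb{R}\to(0,\infty)$, solving $t=\log(1+e^{x})$ gives $f^{-1}(t)=\log(e^{t}-1)$ for $t>0$. Therefore
\[
e^{f^{-1}(\nu-b_y-\eta)-f^{-1}(\eta)}=\frac{e^{\nu-b_y-\eta}-1}{e^{\eta}-1}.
\]
Substituting this into the threshold yields $\log\big(1+(C-1)\tfrac{e^{\eta}-1}{e^{\nu-b_y-\eta}-1}\big)$, which is exactly \eqref{eq:softplus-edl-lb-equal-ce}. The numerator in the last fraction is positive because $\nu-b_y-\eta>0$ by assumption, so the expression is well defined.

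The only step needing care is the domain of $f^{-1}$, and I expect it to be the main (mild) obstacle. Softplus never attains $0$, so $f^{-1}(\eta)$ is defined only for $\eta>0$. I would first treat $0<\eta<\nu-b_y$, where the substitution above is legitimate. I would then handle the boundary case $\eta=0$ separately, in one of two ways. The first is to interpret the theorem's threshold by continuity: $f^{-1}(\eta)\to-\infty$ as $\eta\downarrow 0$, so the fraction in the theorem tends to $0$. The second is to note directly that the corollary's right-hand side becomes $P(L\ge \log 1)=P(L\ge 0)=1$, since cross-entropy is non-negative. With $\eta=0$ the conditions force $\alpha_j\le b_j$, which is not achievable with softplus evidence plus $b_j$. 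Consequently the $\eta=0$ case is vacuous or trivially consistent, and the closed form extends to it.
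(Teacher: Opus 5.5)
Your proposal is correct and follows the paper's proof, which consists only of substituting $f^{-1}(x)=\log(e^{x}-1)$ into the bound of Theorem~\ref{thm:equal-ce-margin}. Your extra handling of the domain of $f^{-1}$ is a sound refinement that the paper omits: because softplus is strictly positive, the hypothesis $\alpha_j\le b_j+\eta$ already forces $\eta>0$, so the $\eta=0$ case is vacuous.
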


Proofs for Theorem~\ref{thm:equal-ce-margin} and Corollary~\ref{cor:softplus-equal-ce} are provided in the Supplementary Material. Since EDL models are trained with large $\nu$ ($\sim10^4$), and well-trained models generally satisfy $\nu \gg \eta$, the condition in Corollary~\ref{cor:softplus-equal-ce} typically holds.
This indicates that EDL training with \emph{softplus} as $f$ 
 is likely to yield larger inter-class margins than cross-entropy training for the same loss.

To enlarge the margin during transformation, we scale the logits by choosing a prior on $A$ whose mode is greater than 1. In addition, to avoid imposing an overly strong prior, we fix the variance to $\mathrm{Var}(A) = 5$.


\section{Experiments}
We begin by comparing the performance of transformation parameters with varying dimensionalities.
We then present the main results on image classification and LLM question-answering tasks.
Finally, we evaluate alternative transformation strategies to assess whether ETN offers an effective  approach to post-hoc EDL transformation.

\subsection{Experimental Setting}

\paragraph{Image Classification Datasets.}
We evaluate on CIFAR-10~\citep{cifar10} and ImageNet~\citep{5206848}. For OOD detection, we use SVHN~\citep{37648} and CIFAR-100~\citep{cifar10} as OOD sets for CIFAR-10, and ImageNet-A~\citep{Hendrycks_2021_CVPR}, ImageNet-Sketch~\citep{wang2019learning}, and ImageNet-R~\citep{hendrycks2021many} as OOD sets for ImageNet. 

Since our focus is post-hoc uncertainty estimation, we use 5\% of the CIFAR-10 training data for adaptation for both ETN and all baselines, while the remaining 95\% is used to pretrain the model with cross-entropy loss. For ImageNet, as the test set is not publicly available, we split the validation set into 20\% for adaptation and 80\% for evaluation.

\paragraph{LLM Datasets.}
We evaluate on two multiple-choice QA benchmarks with a fixed number of answer choices: OBQA~\citep{OpenBookQA2018} and RACE~\citep{lai-etal-2017-race}.
For OOD evaluation, we use the same dataset for both tasks, consisting of three subsets from MMLU~\citep{hendryckstest2021}: \emph{mathematics}, \emph{computer science}, and \emph{health}.

For both OBQA and RACE, to ensure they represent in-distribution data, we train the LLMs on their respective training sets using cross-entropy loss and use the validation sets to adapt all methods for post-hoc uncertainty estimation.

\paragraph{Models.} Following prior work on EDL~\citep{shen2022posthocuncertaintylearningusing,deng2023uncertaintyestimationfisherinformationbased,chen2024r}, we use VGG16 for CIFAR-10. For ImageNet, where training from scratch is computationally expensive, we use a ResNet50 checkpoint pretrained on \texttt{IMAGENET1K\_V2} provided by \texttt{TorchVision}. For both OBQA and RACE, we use Llama-3.1-8B~\citep{grattafiori2024llama}.

Since LLMs have large vocabulary spaces, which can confound the interpretation of the results, we restrict the output space to only the multiple-choice answer tokens—\texttt{A}, \texttt{B}, \texttt{C}, and \texttt{D}—following prior work~\cite{yangbayesian,li2025calibrating}.

\paragraph{Baselines.}\label{para:image_baselines} 
We consider two groups of baselines:  
(1) \textit{Conventional uncertainty estimation methods}: fine-tuning the full model with cross-entropy loss \textbf{$(\text{MAP}_{\text{CE}})$}, Deep Ensemble \textbf{(DeepEns)}~\citep{NIPS2017_9ef2ed4b}, MC-Dropout \textbf{(MCD)}~\citep{pmlr-v48-gal16}, and Laplace approximation \textbf{(LA)}~\citep{daxberger2021laplace}. For LLMs, instead of laplace approximation, we use Laplace-LoRA \textbf{(LL)}~\citep{yangbayesian}.  
(2) \textit{Post-hoc EDL methods}: fine-tuning with EDL loss \textbf{$(\text{MAP}_{\text{EDL}})$}, EDL through information-bottleneck (\textbf{IB-EDL})~\citep{li2025calibrating}, and the Dirichlet Meta-Model \textbf{(DMM)}~\citep{shen2022posthocuncertaintylearningusing}. 

To compare different transformation strategies, we additionally evaluate static scaling~\citep{guo2017calibration} and AdaTS~\citep{joy2023sample}. For fair comparison to ETN, we set $\bm b$ as a trainable parameter in all EDL-based approaches.

\begin{table*}[t]
\centering
\setlength{\tabcolsep}{5pt}

\begin{subtable}[t]{\textwidth}
\centering
\resizebox{0.8\linewidth}{!}{
\begin{tabular}{lccc ccc ccc} 
\toprule
\multirow{2}{*}{Method}
  & \multicolumn{3}{c}{CIFAR-10}
  & \multicolumn{3}{c}{CIFAR-10 $\to$ SVHN}
  & \multicolumn{3}{c}{CIFAR-10 $\to$ CIFAR-100} \\
\cmidrule(lr){2-4} \cmidrule(lr){5-7} \cmidrule(lr){8-10}
& ACC & MP & UM
& MP & MI & DE
& MP & MI & DE \\
\midrule
$\mathrm{MAP}_{\text{CE}}$ & \result{87.95}{1.2} & \result{97.66}{0.35} & -- 
& \result{73.26}{5.59} & -- & -- 
& \result{79.10}{2.29} & -- & -- \\
DeepEns & \result{90.69}{0.56} & \result{98.93}{0.11} & -- 
& \result{83.71}{1.63} & \result{26.73}{0.09} & -- 
& \result{85.76}{0.77} & \result{48.66}{0.24} & -- \\
MCD & \result{87.12}{1.05} & \result{95.68}{1.41} & -- 
& \result{68.38}{3.54} & \result{72.44}{1.55} & -- 
& \result{74.14}{4.88} & \result{78.10}{2.82} & -- \\
LA & \result{\underline{89.05}}{0.15} & \result{\underline{98.66}}{0.03} & -- 
& \result{77.21}{0.11} & \result{75.69}{0.13} & -- 
& \result{\underline{84.97}}{0.11} & \result{84.59}{0.10} & -- \\
$\mathrm{MAP}_{\text{EDL}}$ & \result{86.81}{1.06} & \result{97.94}{0.23} & \result{\underline{97.81}}{0.18} 
& \result{75.88}{2.37} & \result{\underline{76.42}}{3.81} & \result{76.22}{3.01} 
& \result{83.75}{0.33} & \result{\underline{84.62}}{0.28} & \result{84.43}{0.23} \\
DMM & \result{87.10}{3.20} & \result{98.46}{0.24} & \result{96.83}{0.77} 
& \result{\underline{81.31}}{2.54} & \result{75.43}{11.54} & \result{\underline{81.59}}{6.95} 
& \result{82.65}{1.92} & \result{81.61}{4.35} & \result{\underline{84.61}}{2.54} \\
IB-EDL & \result{88.69}{1.19} & \result{97.90}{0.58} & \result{97.75}{0.51} 
& \result{62.37}{5.54} & \result{61.72}{5.67} & \result{62.16}{5.62} 
& \result{78.96}{2.83} & \result{78.91}{2.51} & \result{79.08}{2.75} \\
\rowcolor{gray!15}
ETN & \result{\textbf{90.70}}{0.00} & \result{\textbf{98.99}}{0.11} & \result{\textbf{98.41}}{0.46} 
& \result{\textbf{85.19}}{1.55} & \result{\textbf{85.22}}{1.05} & \result{\textbf{85.60}}{1.48} 
& \result{\textbf{86.67}}{0.28} & \result{\textbf{86.47}}{0.62} & \result{\textbf{86.84}}{0.32} \\
\bottomrule
\end{tabular}}
\end{subtable}

\vspace{0.75em}
\begin{subtable}[t]{\textwidth}
\centering
\resizebox{0.9\linewidth}{!}{
\begin{tabular}{lccc ccc ccc ccc}
\toprule
\multirow{2}{*}{Method}
  & \multicolumn{3}{c}{ImageNet}
  & \multicolumn{3}{c}{ImageNet $\to$ ImageNet-A}
  & \multicolumn{3}{c}{ImageNet $\to$ ImageNet-S}
  & \multicolumn{3}{c}{ImageNet $\to$ ImageNet-R} \\
\cmidrule(lr){2-4} \cmidrule(lr){5-7} \cmidrule(lr){8-10} \cmidrule(lr){11-13}
& ACC & MP & UM
& MP & MI & DE
& MP & MI & DE
& MP & MI & DE \\
\midrule
$\mathrm{MAP}_{\text{CE}}$
  & \result{46.03}{0.3} & \result{\underline{79.01}}{0.5} & --
  & \result{92.50}{0.0} & -- & --
  & \result{64.19}{2.5} & -- & --
  & \result{73.09}{0.8} & -- & -- \\

DeepEns & \result{26.98}{1.0} & \result{38.55}{1.6} & -- 
& \result{85.02}{1.0} & \result{84.16}{0.1} & -- 
& \result{31.73}{0.9} & \result{42.67}{0.1} & --
& \result{58.90}{3.2} & \result{42.41}{0.4} & --\\

MCD
  & \result{41.14}{1.4} & \result{75.28}{1.3} & --
  & \result{91.75}{0.5} & \result{83.79}{0.1} & --
  & \result{58.90}{3.2} & \result{42.41}{0.4} & --
  & \result{72.82}{0.9} & \result{56.79}{0.6} & -- \\

LA
  & \result{\underline{69.81}}{0.0} & \result{0.13}{0.0} & --
  & \result{\underline{95.07}}{0.0} & \result{\textbf{96.36}}{0.0} & --
  & \result{\underline{71.35}}{0.0} & \result{\textbf{78.25}}{0.0} & --
  & \result{\underline{79.96}}{0.1} & \result{\textbf{84.05}}{0.0} & -- \\

$\mathrm{MAP}_{\text{EDL}}$
  & \result{32.97}{0.3} & \result{56.99}{0.6} & \result{\underline{35.09}}{0.4}
  & \result{92.42}{0.3} & \result{90.47}{0.4} & \result{\underline{90.63}}{0.5}
  & \result{64.74}{4.6} & \result{65.74}{4.3} & \result{\underline{66.06}}{4.4}
  & \result{73.51}{2.0} & \result{76.50}{2.0} & \result{\underline{76.63}}{2.0} \\

DMM
  & \result{2.53}{0.3} & \result{20.33}{0.2} & \result{18.22}{0.3}
  & \result{83.78}{0.1} & \result{83.77}{0.1} & \result{83.76}{0.1}
  & \result{42.44}{0.6} & \result{42.50}{0.5} & \result{42.47}{0.5}
  & \result{56.51}{0.7} & \result{56.58}{0.6} & \result{56.57}{0.6} \\

IB-EDL
  & \result{23.47}{2.5} & \result{70.00}{2.0} & \result{9.60}{1.3}
  & \result{87.73}{0.9} & \result{84.28}{0.1} & \result{83.74}{0.1}
  & \result{51.81}{2.4} & \result{43.31}{0.6} & \result{43.92}{0.1}
  & \result{64.42}{1.7} & \result{56.90}{0.2} & \result{56.74}{0.1} \\
\rowcolor{gray!15}
ETN
  & \result{\textbf{79.61}}{0.0} & \result{\textbf{88.04}}{0.1} & \result{\textbf{85.29}}{0.1}
  & \result{\textbf{95.78}}{0.1} & \result{\underline{93.98}}{0.1} & \result{\textbf{93.60}}{0.3}
  & \result{\textbf{74.91}}{0.5} & \result{\underline{68.03}}{1.0} & \result{\textbf{67.26}}{1.1}
  & \result{\textbf{83.65}}{0.5} & \result{\underline{79.40}}{0.8} & \result{\textbf{78.65}}{1.0} \\
\bottomrule
\end{tabular}}
\end{subtable}
\caption{
AUPR scores on CIFAR-10, SVHN, and CIFAR-100 (top), and on ImageNet, ImageNet-A, ImageNet-S, and ImageNet-R (bottom).
}
\label{tab:main2}
\end{table*}

\begin{table*}[t]
\centering
\setlength{\tabcolsep}{5pt}

\resizebox{0.9\linewidth}{!}{
\begin{tabular}{lccc ccc ccc ccc}
\toprule
\multirow{2}{*}{Method}
  & \multicolumn{3}{c}{RACE}
  & \multicolumn{3}{c}{RACE $\to$ MMLU}
  & \multicolumn{3}{c}{OBQA}
  & \multicolumn{3}{c}{OBQA $\to$ MMLU} \\
\cmidrule(lr){2-4} \cmidrule(lr){5-7} \cmidrule(lr){8-10} \cmidrule(lr){11-13}
& ACC & MP & UM
& MP & MI & DE
& ACC & MP & UM
& MP & MI & DE \\
\midrule
$\mathrm{MAP}_{\text{CE}}$
  & \result{89.54}{0.1} & \result{\underline{97.05}}{0.1} & --
  & \result{96.01}{0.3} & -- & -- 
  & \result{87.07}{0.2} & \result{96.94}{0.2} & --
  & \result{\underline{89.83}}{0.9} & -- & -- \\

DeepEns
  & \result{85.84}{0.1} & \result{96.66}{0.2} & --  
  & \result{94.28}{0.5} & \result{91.42}{1.1} & --
  & \result{83.47}{0.4} & \result{94.51}{0.2} & --
  & \result{87.08}{1.0} & \result{\underline{83.68}}{0.8} & --  \\
  
MCD
  & \result{\underline{89.66}}{0.0} & \result{96.94}{0.0} & --
  & \result{\underline{96.59}}{0.0} & \result{\textbf{96.33}}{0.0} & -- 
  & \result{86.40}{0.2} & \result{\textbf{97.18}}{0.0} & -- 
  & \result{88.99}{0.0} & \result{82.38}{0.0} & -- \\

LL
  & \result{88.74}{0.2} & \result{64.98}{0.3} & --
  & \result{25.24}{0.9} & \result{27.22}{0.8} & -- 
  & \result{86.07}{0.1} & \result{51.27}{0.2} & --  
  & \result{24.74}{0.6} & \result{24.63}{0.3} & -- \\
  
$\mathrm{MAP}_{\text{EDL}}$
  & \result{84.94}{0.2} & \result{91.70}{0.4} & \result{87.86}{1.5} 
  & \result{91.19}{0.3} & \result{87.67}{1.0} & \result{91.18}{0.3}
  & \result{80.27}{1.2} & \result{85.62}{1.8} & \result{76.69}{1.7} 
  & \result{79.86}{1.9} & \result{70.16}{1.8} & \result{79.95}{1.9} \\

DMM
  & \result{89.07}{0.2} & \result{96.76}{0.0} & \result{\underline{95.20}}{0.5}
  & \result{93.11}{0.8} & \result{92.06}{0.6} & \result{93.25}{0.7}
  & \result{\underline{87.20}}{0.1} & \result{95.30}{0.2} & \result{\underline{93.46}}{0.8} 
  & \result{85.11}{0.7} & \result{81.59}{1.8} & \result{84.78}{0.6} \\

IB-EDL
  & \result{86.00}{0.7} & \result{96.20}{0.5} & \result{92.70}{0.4}
  & \result{94.45}{0.7} & \result{85.95}{1.4} & \result{\underline{94.38}}{0.7}
  & \result{81.60}{0.0} & \result{94.33}{0.2} & \result{83.10}{0.6} 
  & \result{89.74}{1.0} & \result{68.20}{1.8} & \result{\underline{89.87}}{1.0} \\
\rowcolor{gray!15}
ETN
  & \result{\textbf{89.69}}{0.0} & \result{\textbf{97.60}}{0.0} & \result{\textbf{96.00}}{0.2} 
  & \result{\textbf{96.80}}{0.0} & \result{\underline{94.51}}{1.3} & \result{\textbf{94.65}}{1.3}
  & \result{\textbf{88.80}}{0.0} & \result{\underline{97.15}}{0.0} & \result{\textbf{94.57}}{0.2} 
  & \result{\textbf{91.70}}{0.0} & \result{\textbf{89.79}}{0.9} & \result{\textbf{90.91}}{0.6} \\
\bottomrule
\end{tabular}
}
\caption{AUPR scores on RACE and OBQA with MMLU subsets as OOD.}
\label{tab:main3}
\end{table*}
\paragraph{Uncertainty Metrics.} 
For confidence estimation on in-distribution (ID) datasets, we use \textbf{Maximum Probability (MP)} and \textbf{Uncertainty Mass (UM)}, where UM corresponds to the total concentration $\alpha_0$. We measure the Area under the Precision–Recall Curve (AUPR) between prediction correctness (1 for correct and 0 for incorrect) and these metrics.
For OOD detection, we evaluate \textbf{MP} to capture total uncertainty, and \textbf{Mutual Information (MI)} and \textbf{Differential Entropy (DE)} to capture distributional uncertainty.
AUPR is computed by treating ID samples as positive (label 1) and OOD samples as negative (label 0).
When multiple OOD datasets are used, we report the average score across them and denote it as \{\texttt{ID\_dataset\_name}\}-OOD. 


\subsection{Comparison Across Transformation Dimensionalities}
\label{sec:compare_dim}
To examine how the dimensionality of the transformation parameter affects model behavior, we compare scalar, vector, and matrix transformations. Note that the scalar variant \emph{preserves the logit ordering} and therefore \emph{retains accuracy}. The results are summarized in Figure~\ref{fig:dim}. Overall, the outcomes are mixed. On CIFAR-10, all three configurations perform comparably in ID confidence estimation, with the matrix variant achieving the best OOD detection.
However, on OBQA, the scalar variant outperforms both the vector and matrix variants in both ID and OOD settings. We also note that vector and matrix transformations frequently lead to reductions in predictive accuracy across experiments.

Why, then, do higher-dimensional transformations not consistently outperform the scalar version?
We attribute this to two main factors. First, under common EDL objectives, the primary effect is to control the Dirichlet concentration~\cite{shen2024uncertainty}, for which scalar scaling is sufficient to shape the Dirichlet parameters. Second, because pretrained models already possess strong predictive capability, higher-dimensional transformations introduce unnecessary flexibility, increasing the risk of overfitting to the validation data.

Therefore, we adopt the scalar setting as the default in all subsequent experiments.

\begin{figure*}[ht]
    \centering
    \begin{minipage}{0.5\textwidth}
        \centering
        \includegraphics[width=\linewidth]{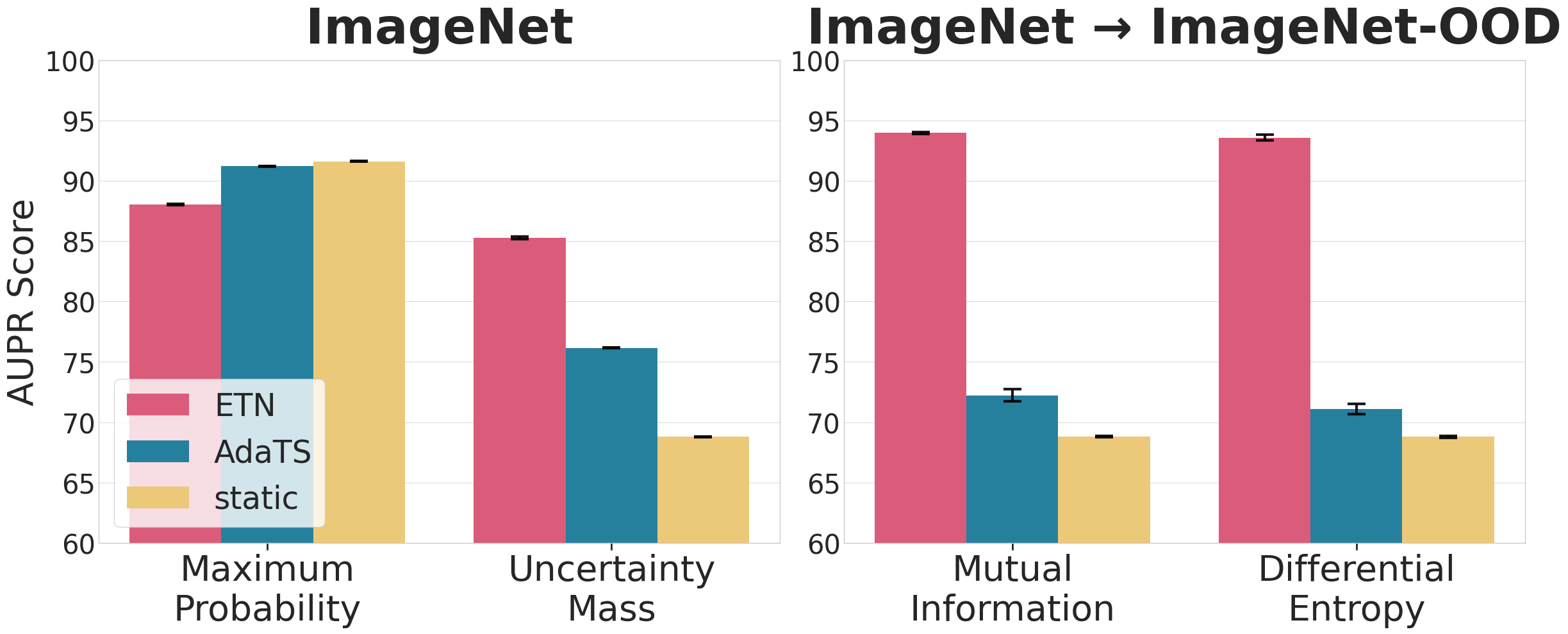}
    \end{minipage}\hfill
    \begin{minipage}{0.5\textwidth}
        \centering
        \includegraphics[width=\linewidth]{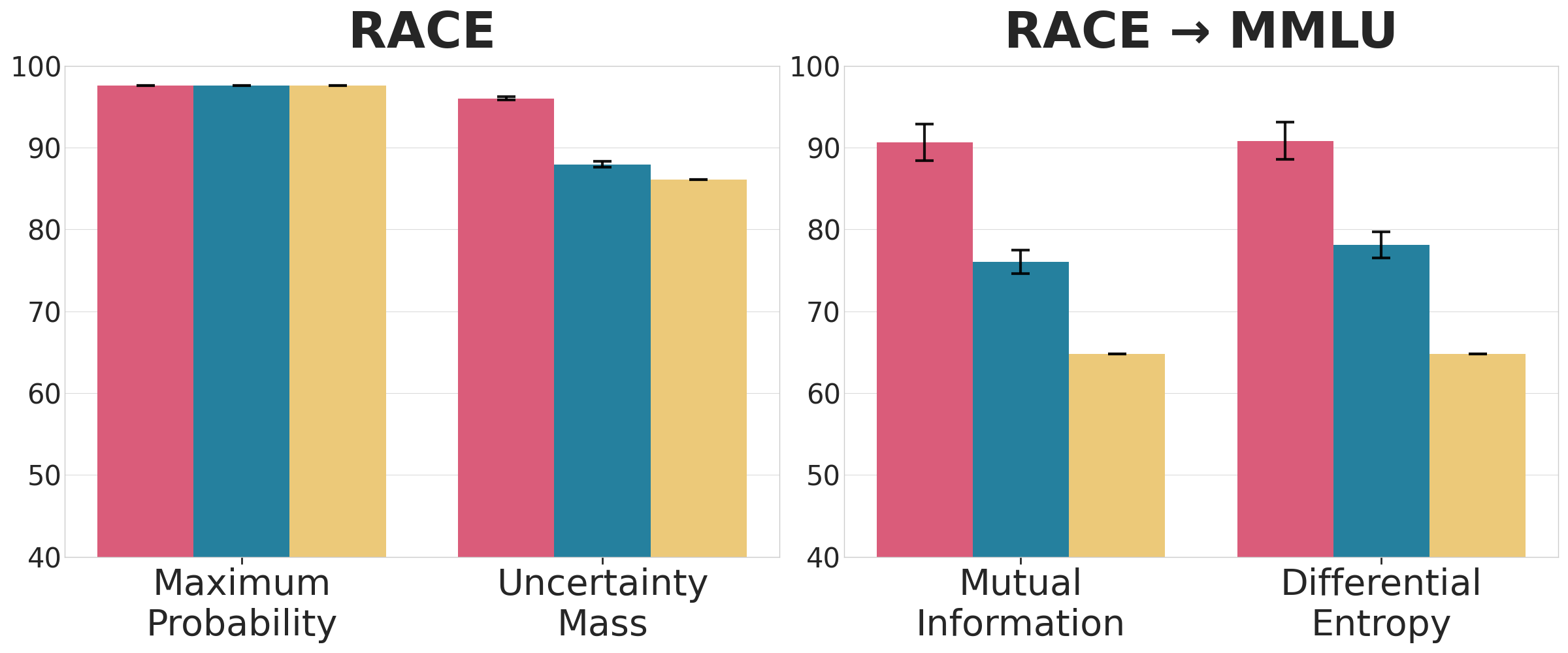}
    \end{minipage}\hfill
    \caption{Comparison of uncertainty estimation performance based on different transformation methods.}
    \label{fig:trans}
\end{figure*}

\begin{figure*}[ht]
    \centering
    \begin{minipage}{0.25\textwidth}
        \centering
        \includegraphics[width=\linewidth]{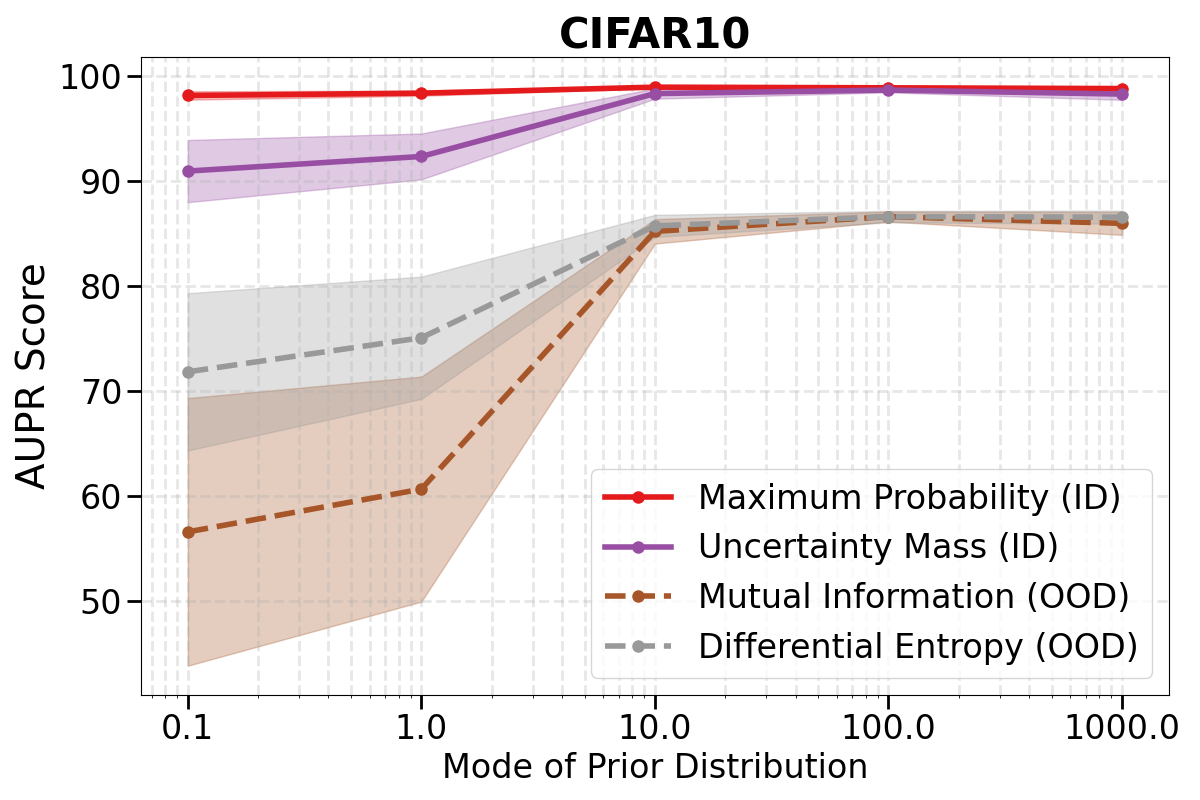}
    \end{minipage}\hfill
    \begin{minipage}{0.25\textwidth}
        \centering
        \includegraphics[width=\linewidth]{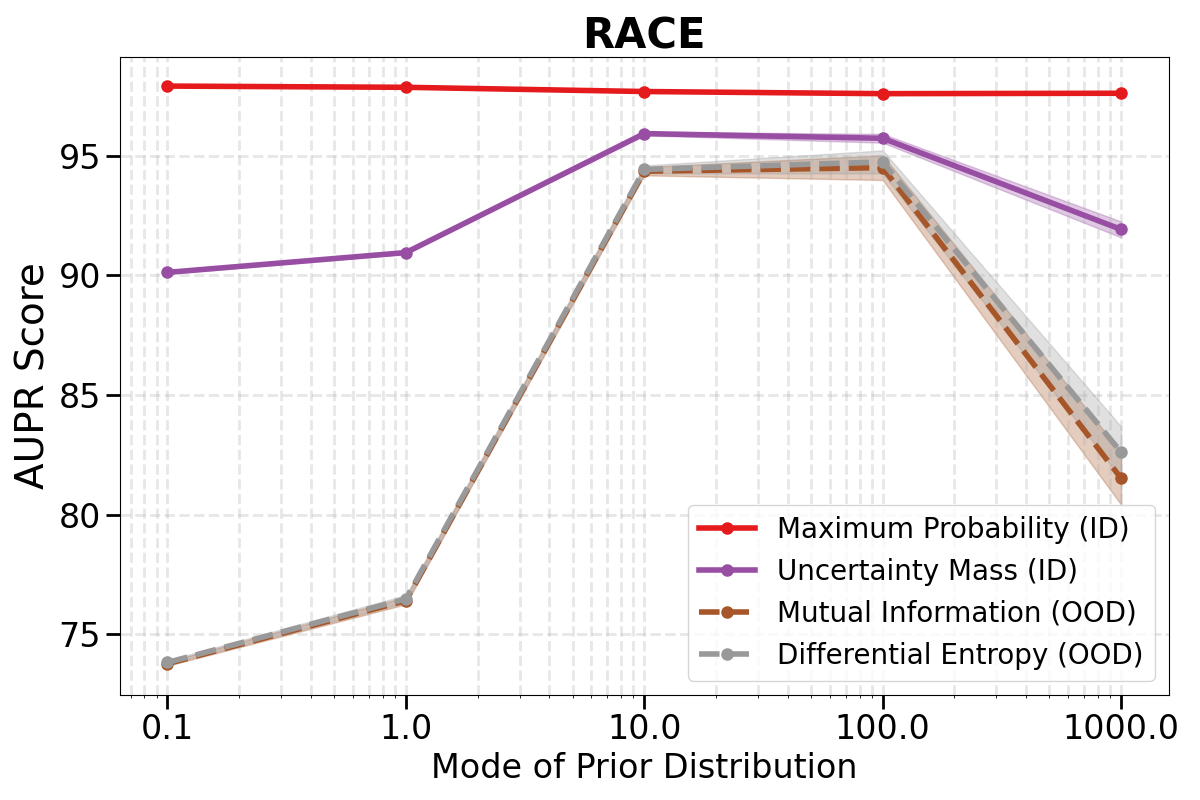}
    \end{minipage}\hfill
    \begin{minipage}{0.25\textwidth}
        \centering
        \includegraphics[width=\linewidth]{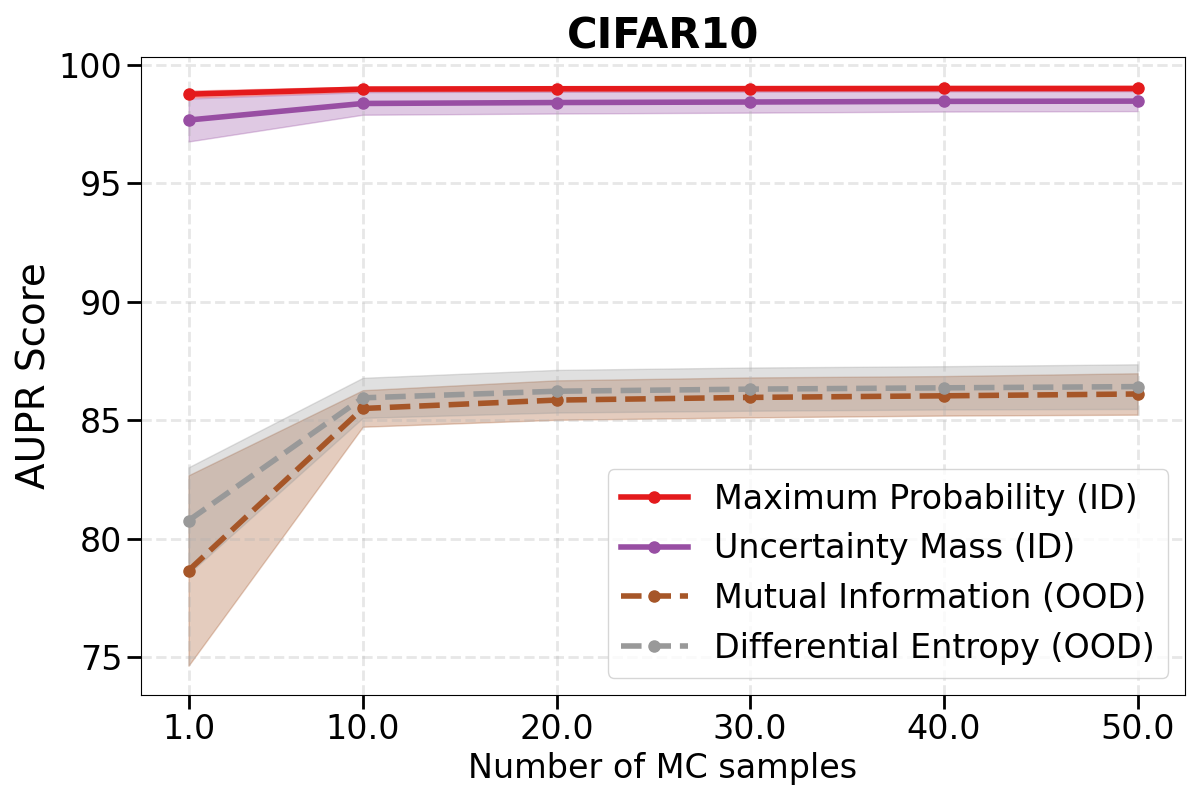}
    \end{minipage}\hfill
    \begin{minipage}{0.25\textwidth}
        \centering
        \includegraphics[width=\linewidth]{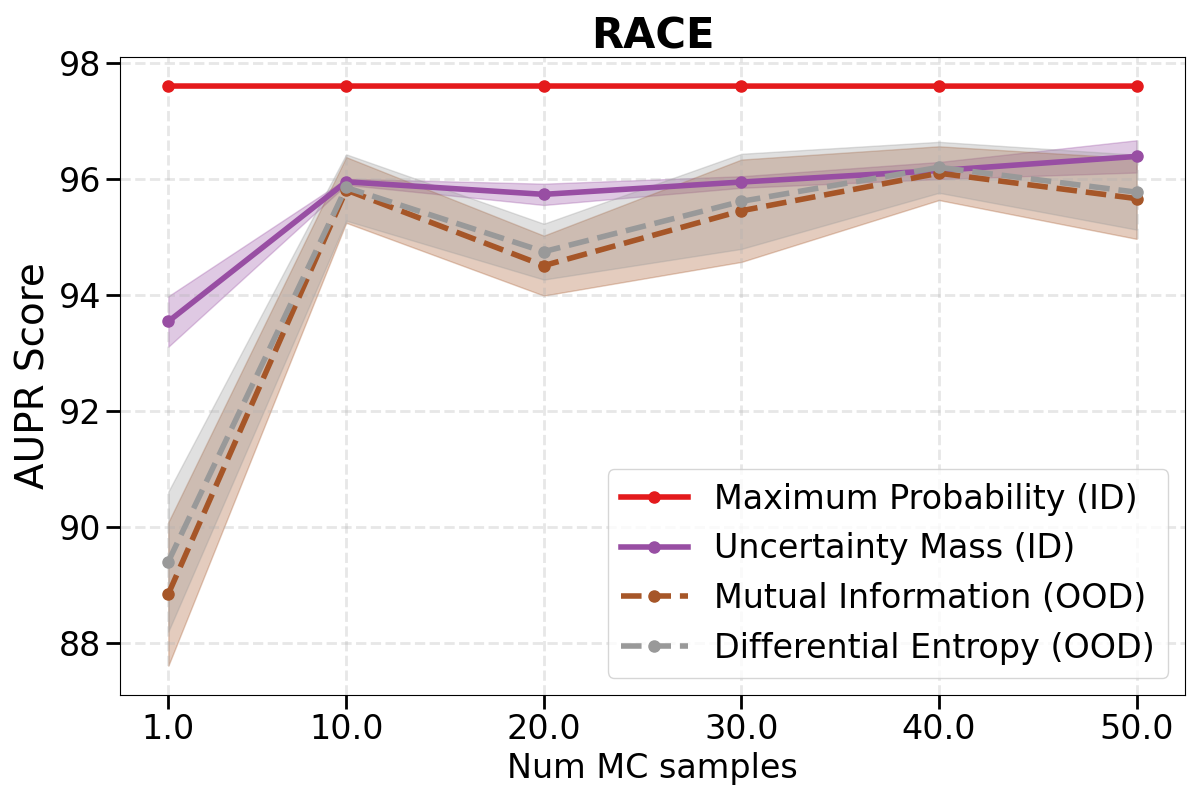}
    \end{minipage}\hfill
    \caption{Ablation studies on each AUPR scores on different parameters of prior distribution (left) and different number of MC samples (right). 95\% confidence interval is shaded.}
    \label{fig:ablation}
\label{fig:mode_auroc}
\end{figure*}


\subsection{Image Classification Results}
Table~\ref{tab:main2} summarizes results on CIFAR-10 and ImageNet.
On CIFAR-10, ETN consistently outperforms all baselines in uncertainty estimation while preserving accuracy.
In contrast, other methods exhibit noticeable accuracy drops, suggesting overfitting to the relatively small dataset used for post-hoc adaptation compared to the pretraining data.

A similar pattern emerges on ImageNet. Most baselines lose pretrained accuracy, with DMM being the most affected. Despite extensive hyperparameter tuning (e.g., batch size, learning rate, and architecture), DMM failed to improve beyond the results reported in Table~\ref{tab:main2}.
Since DMM introduces the largest number of additional trainable parameters among all baselines, these findings indicate that the dataset–model size mismatch is the primary source of performance degradation~\citep{Bahri_2024,hestness2017deeplearningscalingpredictable}.

We also note that the Laplace approximation achieves competitive OOD detection performance in MI, but at the cost of reduced accuracy and near-zero MP for confidence estimation.
Overall, none of the baselines provide competitive performance across both confidence estimation and OOD detection, whereas ETN consistently performs well on both.


\subsection{LLM Results}

Table~\ref{tab:main3} presents results of RACE and OBQA. 
We observe trends consistent with image classification: most uncertainty estimation methods that fine-tune the model directly degrade its original accuracy, whereas ETN preserves accuracy.

Among the baselines, MC-Dropout performs competitively in both confidence estimation and OOD detection.
However, as shown in Figure~\ref{fig:frontier}, it sacrifices a significant inference runtime for uncertainty estimation, limiting its practical applicability.
Among the methods employing EDL-style uncertainty estimation, ETN consistently achieves the best results across nearly all settings, demonstrating that even simple scalar scaling is sufficient to transform categorical predictions into effective evidential distributions.

\subsection{Comparison of Transformation Methods}
\label{sec:compare_trans_method}
We further compare three transformation strategies that preserve accuracy: ETN, static scaling, and AdaTS.  
Results are shown in Figure~\ref{fig:trans}. On ImageNet, static scaling and AdaTS shows slightly higher maximum probability than ETN. 
However, they suffer substantial drops (about 17\% and 10\%, respectively) in Uncertainty Mass, and both methods underperform in OOD detection by a significant margin, with 21.5\% and 25.5\% for mutual information, respectively. 
This indicates that explicitly modeling a distribution over the transformation parameter, and treating the transformed evidential output as a posterior, is more effective than using a single deterministic parameter.
Moreover, the fact that both ETN and AdaTS outperform static scaling shows that sample-dependent transformation is essential for constructing meaningful evidential distributions. 

\subsection{Ablation Study}
\paragraph{Prior Parameters.} We begin by examining how model performance varies with different choices of prior parameters for \(p(A)\). Increasing the mode of \(p(A)\) shifts the distribution toward larger values of \(A\), which in turn enlarges the logit margin—consistent with the behavior characterized in Corollary~\ref{cor:softplus-equal-ce}. Figure~\ref{fig:mode_auroc} presents the resulting AUPR trends on CIFAR-10 and RACE. In both settings, all AUPR metrics consistently improve as the mode increases. This suggests that larger prior modes encourage greater inter-class margins, enabling the transformed Dirichlet parameters to better capture evidential distributions and thereby enhance uncertainty estimation quality.

However, an excessively large mode for RACE leads to degraded performance, likely due to the $(C-1)$ term in Equation~\ref{eq:softplus-edl-lb-equal-ce}, which becomes much larger for LLMs (e.g., $C\sim 10^5$ for Llama-3.1) than for CIFAR-10 classifiers ($C=10$),  thereby relaxing the condition in Corollary~\ref{cor:softplus-equal-ce} for LLMs. 

\paragraph{Number of Monte-Carlo samples.}
We also test how number of Monte-Carlo (MC) samples of $A$ affects ETN's performance. As the number of samples increases from 1 to 10, performance improves, confirming that the variational distribution is not collapsed into a Dirac-delta and that sampling multiple $A$ values provides better estimates.
Beyond 10 samples, performance plateaus, indicating that additional samples offer little benefit and that our method remains computationally efficient.

\section{Conclusion}
In this work, we propose the \textbf{Evidential Transformation Network (ETN)}, an efficient approach for post-hoc uncertainty estimation.
Experiments on both image classification and large language models show that ETN consistently outperforms other post-hoc methods, while preserving accuracy and adding minimal inference latency.
We hope that ETN contributes to bridging the gap between the practicality and trustworthiness of pretrained models.

\section{Acknowledgement}
This work was supported by Institute for Information \& communications Technology Promotion(IITP) grant funded by the Korea government(MSIT). This work was partly supported by ICT Creative Consilience Program through the Institute of Information \& Communications Technology Planning \& Evaluation(IITP) grant funded by the Korea government(MSIT). This work was supported by Institute of Information \& communications Technology Planning \& Evaluation(IITP) under the Leading Generative AI Human Resources Development(IITP-2025-R2408111) grant funded by the Korea government(MSIT).

{
    \small
    \bibliographystyle{ieeenat_fullname}
    \bibliography{main}
}

\clearpage
\setcounter{page}{1}
\maketitlesupplementary

\section{Limitations}
While ETN improves the uncertainty estimation performance of pretrained models without harming accuracy and with only minimal additional computational cost, it also has several limitations.

First, the benefits of ETN are largely empirical rather than theoretical. Recent works have raised concerns about EDL from a theoretical standpoint, arguing that current training procedures do not guarantee a faithful modeling of epistemic uncertainty~\cite{shen2024uncertainty,bengs2023secondorderscoringrulesepistemic,juergens2024epistemic}. Our observation that simple scalar scaling is often sufficient to make logits suitable as Dirichlet parameters may reflect inherent limitations in existing EDL training formulations. However, we do not think that this empirical success should be underestimated, as robust and consistent improvements across diverse datasets and architectures are precisely what is required for practical deployment of uncertainty-aware pretrained models, even in the absence of a complete theoretical account.

Second, our method requires access to the logits and the last hidden representation of the pretrained model, which may not be available when using closed-source models exposed only through an API (e.g., recent GPT models). Nevertheless, since ETN depends solely on these two quantities—unlike many uncertainty estimation baselines that require access to the full model architecture or gradients—it remains relatively compatible with \emph{gray-box models}~\cite{barshalom2025tokenprobabilitieslearnablefast,hiranandani2025logitsneedadaptclosed}.

\section{Proofs and Derivations}
In this section, we analyze the behavior of logits produced by models trained with cross-entropy and EDL losses. We first define the softmax per-sample $(x,y)$ cross-entropy loss 
as:
\[
\mathcal{L}_{\text{CE}}(\bm z,y) \;=\; -\log \frac{e^{z_y}}{\sum_{j=1}^C e^{z_j}}
= \log\!\Big(1+\sum_{j\ne y} e^{\,z_j-z_y}\Big) 
\]

Then we define the inter-class margin of an sample as:
\[
\gamma(\bm z,y)=z_y-\max_{j\ne y}z_j
\]


Given these definitions, we now present two lemmas characterizing the relationship between cross-entropy and EDL models.

\begin{lemma}[Zero loss implies infinite margin]
\label{lem:margin}
Cross-entropy loss becomes zero if and only if the margin between the logits of the label and other logits become infinite. i,e:

\[
\mathcal{L}_{\text{CE}}(\bm z,y)\to 0 
\quad \Longleftrightarrow \quad
\gamma(\bm z, y) \to \infty .
\]
\end{lemma}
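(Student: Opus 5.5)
We prove Lemma~\ref{lem:margin} by sandwiching $\mathcal{L}_{\text{CE}}$ between two explicit functions of the margin $\gamma=\gamma(\bm z,y)$, using the rewritten form $\mathcal{L}_{\text{CE}}(\bm z,y)=\log\!\big(1+\sum_{j\ne y}e^{z_j-z_y}\big)$ given above. Since $\log(1+\cdot)$ is continuous, strictly increasing, and vanishes only at $0$, the loss tends to zero exactly when the sum $S(\bm z):=\sum_{j\ne y}e^{z_j-z_y}$ tends to zero. So we only need to relate $S$ to $\gamma$.

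\textbf{Sandwich bound.} For each $j\ne y$ we have $z_j-z_y\le \max_{k\ne y}z_k-z_y=-\gamma$. Summing over the $C-1$ indices gives the upper bound $S\le (C-1)e^{-\gamma}$. The sum also contains the term attaining the maximum, which equals $e^{-\gamma}$, and every other term is nonnegative, so $S\ge e^{-\gamma}$. Hence
\[
\log\!\big(1+e^{-\gamma}\big)\;\le\;\mathcal{L}_{\text{CE}}(\bm z,y)\;\le\;\log\!\big(1+(C-1)e^{-\gamma}\big).
\]

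\textbf{Both directions.} ($\Leftarrow$) If $\gamma\to\infty$, the right-hand side tends to $\log 1=0$. Since $C$ is fixed and finite, this forces $\mathcal{L}_{\text{CE}}\to 0$. ($\Rightarrow$) If $\mathcal{L}_{\text{CE}}\to 0$, the left inequality gives $\log(1+e^{-\gamma})\to 0$. Thus $e^{-\gamma}\to 0$, i.e.\ $\gamma\to\infty$. Equivalently, $\gamma\ge -\log\!\big(e^{\mathcal{L}_{\text{CE}}}-1\big)$, and the right side diverges as the loss goes to zero.

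\textbf{Main obstacle.} The argument itself is routine, so the real work is stating precisely what the limits mean. I would phrase the statement for an arbitrary sequence of logit vectors $\bm z^{(n)}$ with $C$ and $y$ held fixed. The finiteness of $C$ is what the upper bound depends on, since the constant $(C-1)$ must stay bounded. I would also note that the lower bound uses only the single maximizing term, so it holds even if the maximizer changes along the sequence.
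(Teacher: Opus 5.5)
Your proof is correct and is essentially the paper's argument: the paper's forward direction bounds the sum $\sum_{j\ne y}e^{z_j-z_y}$ by $e^{\varepsilon}-1$ to get $z_y-z_j\ge -\log(e^{\varepsilon}-1)$ for each $j\ne y$ (your lower bound $\gamma\ge -\log(e^{\mathcal{L}_{\text{CE}}}-1)$), and its reverse direction sends each of the finitely many terms $e^{z_j-z_y}$ to zero (your upper bound $(C-1)e^{-\gamma}$). Combining both into a single two-sided sandwich is a tidier presentation of the same two inequalities.
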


\begin{proof} Suppose $\mathcal{L}_{\text{CE}}(\bm z,y)\le \varepsilon$. Then the softmax probability of the correct class satisfies
\[
\frac{e^{z_y}}{\sum_j e^{z_j}} \;\ge\; e^{-\varepsilon}.
\]
Rearranging gives
\[
\sum_{j\ne y} e^{z_j} \;\le\; e^{z_y}(e^\varepsilon-1).
\]
Hence, for each $j\ne y$,
\[
z_y - z_j \;\ge\; -\log(e^\varepsilon-1).
\]
Since $-\log(e^\varepsilon-1)\to\infty$ as $\varepsilon\rightarrow 0$, the margin diverges.

\noindent Conversely, if $\gamma(\bm z, y)\to\infty$, then $z_j-z_y\to -\infty$ for each $j\ne y$, so $e^{z_j-z_y}\to 0$. Therefore

\[
\mathcal{L}_{\text{CE}}(\bm z,y) = \log\!\Big(1+\sum_{j\ne y} e^{z_j-z_y}\Big)\to 0
\]
\end{proof}

\begin{lemma}[Margin of EDL models]
\label{thm:equal-ce-loss-bounds}
For a sample $(x,y)$, assume there exists $\eta$ with $0\le \eta<\nu-b_y$ such that
\[
\alpha_y \ \ge\ \nu-\eta,
\qquad
\alpha_j \ \le\ b_j+\eta\quad \forall j\neq y.
\]
Then the inter-class margin of an sample $(x,y)$ of EDL models is defined by:
\begin{equation}
\label{eq:edl-LB-equal-ce}
\gamma_{\mathrm{EDL}}(\bm z,y)=
f^{-1}(\nu-b_y-\eta)\;-\;f^{-1}(\eta)
\end{equation}
\end{lemma}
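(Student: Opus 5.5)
The plan is to work directly from the EDL parametrisation $\bm\alpha = f(\bm z)+\bm b$ and invert the monotone map $f$ coordinatewise. Since $f$ is strictly increasing and non-negative, it has an increasing inverse on its range, so $z_i = f^{-1}(\alpha_i - b_i)$ for every $i$. The margin $\gamma_{\mathrm{EDL}}(\bm z,y)=z_y-\max_{j\ne y}z_j$ is then a difference of $f^{-1}$-values of the evidence terms $\alpha_i-b_i$. The statement is best read as giving the guaranteed margin, i.e.\ the extremal value attained under the assumed conditions. I will prove the lower bound $\gamma_{\mathrm{EDL}}\ge f^{-1}(\nu-b_y-\eta)-f^{-1}(\eta)$ and note that it is tight, with equality when both constraints are active.

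\textbf{Step 1: the correct class.} The hypothesis $\alpha_y\ge \nu-\eta$ gives $f(z_y)=\alpha_y-b_y\ge \nu-b_y-\eta$. The condition $\eta<\nu-b_y$ makes the right-hand side positive, so it lies in the range of $f$. Applying the increasing $f^{-1}$ yields $z_y\ge f^{-1}(\nu-b_y-\eta)$.

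\textbf{Step 2: the competing classes.} For $j\ne y$, the hypothesis $\alpha_j\le b_j+\eta$ gives $f(z_j)\le \eta$, so $z_j\le f^{-1}(\eta)$. Taking the maximum over $j\ne y$ gives $\max_{j\ne y}z_j\le f^{-1}(\eta)$.

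\textbf{Step 3: combine.} Subtracting the two bounds gives
\[
\gamma_{\mathrm{EDL}}(\bm z,y)\ge f^{-1}(\nu-b_y-\eta)-f^{-1}(\eta).
\]
Equality holds when $\alpha_y=\nu-\eta$ and some $\alpha_j=b_j+\eta$. This boundary case is the one the EDL target $\bm\alpha^y=\bm 1_C+(\nu-1)\bm e_y$ drives the model toward, and I would phrase the ``defined by'' in the statement as this worst-case value.

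\textbf{Main obstacle.} The delicate point is the domain of $f^{-1}$ at $\eta$. For softplus and the exponential, the range is $(0,\infty)$. When $\eta=0$, $f^{-1}(0)=-\infty$ and the bound is vacuous or infinite, so I would either take $\eta>0$ or interpret $f^{-1}(0)=-\infty$ with the convention that the margin is unbounded. For ReLU, $f^{-1}$ must be taken as a generalized inverse, $f^{-1}(0)$ being any non-positive value. Beyond this, I also need to check that $\nu-b_y-\eta$ lies in the range of $f$, which the assumption $\eta<\nu-b_y$ guarantees for positive-range $f$.
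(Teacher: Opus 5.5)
Your proof is correct and is essentially the paper's own argument. You invert $f$ to get $z_y\ge f^{-1}(\nu-b_y-\eta)$ and $\max_{j\ne y}z_j\le f^{-1}(\eta)$, then subtract; like you, the paper establishes only the lower bound $\gamma_{\mathrm{EDL}}(z,y)\ge f^{-1}(\nu-b_y-\eta)-f^{-1}(\eta)$ despite the ``$=$'' in the statement.

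One small correction to your side remark: for a non-invertible $f$ such as ReLU, the step $f(z_j)\le\eta\Rightarrow z_j\le f^{-1}(\eta)$ needs the upper generalized inverse $f^{-1}(\eta)=\sup\{z: f(z)\le\eta\}$. For ReLU this gives $f^{-1}(0)=0$, not an arbitrary non-positive value.
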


\begin{proof}
From $\alpha_y\ge \nu-\eta$ we get

\begin{equation}
\begin{aligned}
f(z_y) &= \alpha_y - b_y \ \ge\ \nu - b_y - \eta \\
       &\Longrightarrow 
    z_y \ge f^{-1}(\nu - b_y - \eta).\nonumber
\end{aligned}
\end{equation}
For any $j\ne y$, the assumption $\alpha_j\le b_j+\eta$ gives
\[
f(z_j)=\alpha_j-b_j\ \le\ \eta
\quad\Longrightarrow\quad
z_j\ \le\ f^{-1}(\eta).
\]
Taking the maximum over $j\ne y$ yields
$\max_{j\ne y} z_j \le f^{-1}(\eta)$, hence
\[
\gamma_{\mathrm{EDL}}(\bm z,y)
= z_y-\max_{j\ne y} z_j
\ \ge\ f^{-1}(\nu-b_y-\eta)-f^{-1}(\eta),
\]
which is Equation~\ref{eq:edl-LB-equal-ce}.
\end{proof}

\subsection{Proof to Proposition 1} By Lemma~\ref{lem:margin}, zero CE loss is achieved by sending the margins $\gamma(\bm z, y)\to\infty$, which can be done by either pushing the correct logit up or the incorrect logits down. Given this, we provide two explicit cases of logits that both show vanishing cross-entropy loss but lead to bounded and diverging values of $\alpha_0$, respectively.
\paragraph{Bounded $\bm{\tilde{\alpha}_0}$.}
Set $\tilde z_y=0$ and $\tilde z_{j\ne y}=-t$.
Then $\mathcal{L}_{\rm CE}(\tilde{\bm z},y)=\log(1+(C-1)e^{-t})\to0$.
Therefore,
\begin{equation}
\begin{aligned}
\tilde{\alpha}_0
&= \big(f(0)+b\big)+\sum_{j \ne y}\big(f(-t)+b\big) \\
&\to \; \big(f(0)+b\big) + (C-1)b \;<\; \infty,
\end{aligned}
\end{equation}
as $t\to\infty$ since $f(-t)\to0$.

\paragraph{Diverging $\bm{\hat{\alpha}_0}$.}
Set $\hat z_y=t$ and $\hat z_{j\ne y}=0$.
Then $\mathcal{L}_{\rm CE}(\hat{\bm z},y)=\log(1+(C-1)e^{-t})\to0$, and
\begin{equation}
\alpha_0(\hat{\bm z})
= \big(f(t)+b\big) + \sum_{j\ne y}\big(f(0)+b\big)
\;\xrightarrow\; \infty,
\end{equation}
as $t\to\infty$ since $f(t)\to\infty$.

\subsection{Proof of Theorem 1}

For a sample $(x, y)$, assume $L := \mathcal{L}_{\mathrm{CE}}(\bm z, y) = \mathcal{L}_{\mathrm{EDL}}(\bm z, y)$. Since $z_j - z_y \le -\gamma_{\mathrm{CE}}(\bm z, y)$ for all $j \ne y$, we obtain an upper bound on the CE loss:
\begin{equation}
\label{eq:ce_ub_revised}
\begin{aligned}
L
&= \log\!\left(1 + \sum_{j \ne y} e^{z_j - z_y}\right) \\
&\le \log\!\left(1 + (C - 1)\, e^{-\gamma_{\mathrm{CE}}(\bm z, y)}\right).
\end{aligned}
\end{equation}

\noindent Let the lower bound on the EDL margin be
\[
\gamma_{\mathrm{LB}}
:= f^{-1}(\nu - b_y - \eta)\;-\; f^{-1}(\eta).
\]
\noindent Assume further that $L$ satisfies
\begin{equation}
\label{eq:lower_bound_revised}
L \ge \log\!\left(1 + (C - 1) e^{-\gamma_{\mathrm{LB}}}\right).
\end{equation}

\noindent Combining Equation~\ref{eq:ce_ub_revised} and Equation~\ref{eq:lower_bound_revised}, we obtain
\[
\log\!\left(1 + (C - 1) e^{-\gamma_{\mathrm{CE}}(\bm z, y)}\right)
\;\ge\;
\log\!\left(1 + (C - 1) e^{-\gamma_{\mathrm{LB}}}\right).
\]

\noindent Since the logarithm is monotone increasing and $(C - 1) > 0$, it follows that
\[
\gamma_{\mathrm{CE}}(\bm z, y)
\;\le\;
\gamma_{\mathrm{LB}},
\]
and therefore,
\begin{equation}
\label{eq:margin_comparison_revised}
\gamma_{\mathrm{EDL}}(\bm z, y)
\;\ge\;
\gamma_{\mathrm{CE}}(\bm z, y).
\end{equation}

\noindent Define event $A$ as the event that Equation~\ref{eq:lower_bound_revised} holds, and event $B$ as the event that Equation~\ref{eq:margin_comparison_revised} holds.
From the derivation above, we have $A \subseteq B$, which implies
$P(A) \le P(B)$~\citep{evans2004probability}.
Thus,
\begin{equation}
\label{eq:margin_bound}
\begin{aligned}
&P\!\left(\gamma_{\mathrm{EDL}}(\bm z, y) \ge \gamma_{\mathrm{CE}}(\bm z, y)\right) \\
&\quad\ge\;
P\!\Big(
L \ge
\log\!\Big(
1 + \tfrac{C - 1}{e^{f^{-1}(\nu - b_y - \eta) - f^{-1}(\eta)}}
\Big)
\Big).
\end{aligned}
\end{equation}

\subsection{Proof to Corollary 1}
With $f$ as \emph{softplus}, $f^{-1}(x)=\log(e^{x}-1)$. Plugging into Equation~\ref{eq:margin_bound}, we get:
\begin{equation}
\label{eq:softplus-edl-lb-equal-ce}
\begin{aligned}
&P\big(\gamma_{\mathrm{EDL}}(\bm z,y)\ \ge\ \gamma_{\mathrm{CE}}(\bm z,y)\big) \\
&\quad \ge\
P\!\Big(
L
\ge 
\log\big(1 + (C - 1)\tfrac{e^{\eta} - 1}{e^{\nu - b_y - \eta} - 1}\big)
\Big).\nonumber
\end{aligned}
\end{equation}

\begin{algorithm}[t]
\caption{Training and Inference of Evidential Transformation Network}
\label{alg:train_and_infer}
\begin{footnotesize}
\begin{algorithmic}[1]
\Require Dataset $\mathcal{D}=\{(x_i,y_i)\}_{i=1}^{N}$, pretrained model $\theta=h\circ\phi$, number of MC samples $M$, monotonically increasing function $f$, 
\Parameters Evidential Transformation Network $\theta_{\mathrm{ETN}}$, prior belief term $\bm b$
  \For{$(x, y)\in \mathcal{D}$}  \Comment{Loop over dataset}
    \State $\theta_A \gets \theta_{\mathrm{ETN}}\!\big(\phi(x)\big)$ \Comment{{\scriptsize Compute parameters for variational distribution}}
    \State $\bm z \gets \theta(x)$ \Comment{Compute logits for sample $x$}
    \State $\mathcal{P} \gets \emptyset$
    \For{$m \gets 1$ \textbf{to} $M$}
      \State $A^{(m)} \sim 
      Dist(\theta_A)$
      \Comment{{\scriptsize Sample from variational distribution}}
      \State $\bm\alpha' \gets f\!\big(A^{(m)}\bm z\big) + \bm b$
      \State $p'=\mathbb{E}_{\bm{\pi}\sim \mathrm{Dir}(\bm{\alpha'})}\!\left[p(y\mid \bm{\pi})\right]$
      \State $\mathcal{P} \gets \mathcal{P} \cup \{p'\}$
    \EndFor
    \State $\bar p' \gets \frac{1}{M}\sum_{p' \in \mathcal{P}} p'$
    \If{\textsc{Training}}
      \State \textbf{Backprop} through $\mathcal{L}_{\mathrm{ETN}}\!\big(\theta_{\mathrm{ETN}}\big)$
    \Else
      \State \textbf{return} $\displaystyle  \arg\max\limits_{y}\; \bar p'$
    \EndIf
  \EndFor
\end{algorithmic}
\end{footnotesize}
\end{algorithm}

\section{Modeling Transformation Parameterizations} 
\label{sec:dimension}
In this section, we describe how the transformation parameter $A$ is modeled when defined as a scalar, vector, or matrix. Specifically, we explain 
(1) how the variational distribution over $A$ is constructed, and  
(2) how the prior term $\bm b$ is handled.  
For clarity, we denote the scalar case by $a$, the vector case by $\bm a$, and the matrix case by $\bm A$.

\paragraph{Scalar (\(a\in\mathbb{R}_+\)).}
We constrain \(a>0\) and model it with a Gamma distribution:
\[
a \sim \mathrm{Gamma}(\alpha^{\mathrm G},\,\beta^{\mathrm G}),
\]
where the shape $\alpha^{\mathrm G}$ and rate $\beta^{\mathrm G}$ are predicted by ETN.
To strictly preserve accuracy, we set all elements of $\bm b$ to be identical, i.e.,
\[
b_1 = b_2 = \dots = b_C.
\]
\paragraph{Vector (\(\bm a \in \mathbb{R}^C_+\)).} 
We model \(\bm a\) as a product of Gamma distributions, one per class:
\[
\bm a \sim \prod_{i=1}^{C}{\mathrm{Gamma}(\alpha_i^{\mathrm G},\beta^{\mathrm G}_i)}
\]
ETN predicts the shape \(\bm \alpha^{\mathrm G}=(\alpha^{\mathrm G}_1,\dots,\alpha^{\mathrm G}_C)^\top\) and rate \(\bm \beta^{\mathrm G} = (\beta^{\mathrm G}_1,\dots,\beta^{\mathrm G}_C)^\top\).
For $\bm b$, we treat each element independently and train them separately.

\paragraph{Matrix (\(\bm A\in\mathbb{R}^{C\times C}\)).} Matrix transformation are a natural choice since they directly operate in Dirichlet space~\citep{kull2019beyond}. Although the Wishart distribution\cite{wishart1928generalised} would be a natural distribution for positive-definite matrices, in practice we found its parameterization too restrictive and its reparamterization unstable during training. Instead, we model the flattened matrix as a Gaussian with Kronecker-factored covariance:

\[
\mathrm{vec}(\bm A)\;\sim\;\mathcal{N}(\boldsymbol\mu,\;\bm\Sigma), 
\]
where $\bm\Sigma=\bm B\otimes \bm D,\quad$ with $\bm B=\bm L_B\bm L_B^\top$ and $\bm D=\bm L_D\bm L_D^\top$. ETN predicts \(\boldsymbol\mu,\bm L_B,\bm L_D\). 
To encourage monotonic behavior, we apply a \emph{softplus} to the diagonal elements of sampled $\bm A$, keeping off-diagonal terms unconstrained.
The prior $p(\bm A)$ is set as a Gaussian with mode and variance matching the scalar and vector Gamma priors. Additionally, we adopt the ODIR (Off-Diagonal and Intercept Regularization) loss\cite{kull2019beyond} on $\bm \mu$ for stable optimization.
As in the vector case, all elements of $\bm b$ are treated independently and trained separately.

\section{Experimental Setting}

\subsection{Training Details}
The hyperparameters used for training ETN are summarized in Table~\ref{tab:hyper_train}. For LLM experiments, we employ cosine learning-rate scheduling with warm-up steps. All experiments are performed using three different random seeds, and we report the mean along with 95\% confidence intervals.

For post-hoc uncertainty estimation baselines, we select the checkpoint that achieves the highest accuracy on the adaptation dataset. In contrast, for ETN with scalar scaling, we select the checkpoint with the lowest loss on the adaptation dataset.

All training and inference are performed using eight NVIDIA A6000 GPUs.

\subsection{Architecture of Evidential Transformation Network} \label{appen:etn_arch}
The network is composed of independent modules, each predicting a parameter of the variational distribution. (e.g., for a scalar-prediction case, the network contains two modules to predict two parameters, $\alpha^{\mathrm G}$ and $\beta^{\mathrm G}$, respectively.). In the image classification case, each module is implemented as a 2-layer MLP with hidden dimension 256. For LLMs, each module is implemented as a 3-layer MLP with hidden dimension 512.

Moreover, the training and inference procedures of ETN are outlined in Algorithm~\ref{alg:train_and_infer}.

\subsection{Datasets}
\paragraph{CIFAR-10.}
Since CIFAR-10 does not include an official validation split, we use 5\% of the original training set for post-hoc uncertainty adaptation and the remaining 95\% for pretraining the VGG16 model. Evaluation is conducted on the CIFAR-10 test set, as well as the SVHN and CIFAR-100 test sets for OOD assessment.

\paragraph{ImageNet.}
Following \citet{minderer2021revisiting}, we use 20\% of the \texttt{ILSVRC\_2012} validation set for post-hoc adaptation and the remaining 80\% for evaluation. For ImageNet-A, ImageNet-S, and ImageNet-R, we use all available samples from each subset.

\paragraph{RACE.}
To ensure that RACE serves as in-distribution data, we train LLMs on the official training set using cross-entropy loss. The validation set is used to adapt all post-hoc uncertainty estimation methods, and the test set is used exclusively for evaluation.

\paragraph{OBQA.}
Similar to RACE, we treat OBQA as in-distribution by training LLMs on the official training set with cross-entropy loss. We use the validation set for post-hoc adaptation and the test set for evaluation.

\paragraph{MMLU.}
We use three domains from MMLU, adopting the same subsets as in \citet{yangbayesian}. The selected domains and their corresponding subsets are listed in Table~\ref{table:mmlu_domains}.

\renewcommand{\arraystretch}{1.0}

\begin{table}[t]
\centering
\scriptsize
\resizebox{1.0\linewidth}{!}{
\begin{tabular}{lcccc}
\toprule
\textbf{Setting} & \textbf{VGG16} & \textbf{ResNet50} & \textbf{Llama-3.1-8B} & \textbf{Gemma-2-9B} \\
\midrule
\multicolumn{5}{l}{\textbf{\textit{Pretrain}}} \\[-1pt]
\quad Batch size & 1024 & -- & 4 & 4 \\
\quad Learning rate & \(2.5\times10^{-4}\) & -- & \(2.5\times10^{-4}\) & \(2.5\times10^{-4}\) \\
\quad Epochs & 200 & -- & 3 & 3 \\
\midrule
\multicolumn{5}{l}{\textbf{\textit{Uncertainty adaptation}}} \\[-1pt]
\quad Batch size & 1024 & 64 & 8 & 8 \\
\quad Learning rate & \(1\times10^{-3}\) & \(1\times10^{-3}\) & \(1\times10^{-3}\) & \(1\times10^{-3}\) \\
\quad Epochs & 50 & 50 & 5 & 5 \\
\midrule
\multicolumn{5}{l}{\textbf{\textit{ETN}}} \\[-1pt]
\quad Prior mode & 10 & 5 & 100 & 100 \\
\quad Prior variance & 5 & 5 & 5 & 5 \\
\quad MC samples & 20 & 20 & 20 & 20 \\
\quad $\lambda$ & 1 & \(1\times10^{-3}\) & 1 & 1 \\
\quad $\nu$ & \(1\times10^{4}\) & \(1\times10^{4}\) & \(1\times10^{4}\) & \(1\times10^{4}\) \\
\bottomrule
\end{tabular}}
\caption{Training and hyperparameter settings for each model.}
\label{tab:hyper_train}
\end{table}

\subsection{Models}
\paragraph{VGG16.}
We adopt the VGG16 architecture~\citep{simonyan2015very}, which is composed of 16 convolutional layers followed by 3 fully connected layers. Batch normalization is applied to all convolutional layers. All parameters are updated during the pretraining stage, and for baselines that rely on training the original model ($\text{MAP}_{\text{CE}}$, $\text{MAP}_{\text{EDL}}$ and IB-EDL), all parameters are likewise fully fine-tuned.

\paragraph{ResNet50.}
We use the ResNet50 architecture~\citep{he2015deepresiduallearningimage}, a 50-layer convolutional network organized into five \emph{stages}, each containing multiple residual blocks operating at a fixed spatial resolution and channel width. For baselines that require training the original model, all parameters are fully fine-tuned.

\paragraph{Llama-3.1-8B} For baselines that require tuning the original pretrained model, we applied LoRA to all attention layers with a rank of 8 and lora alpha value to 16, and trained only the LoRA layers, following the setting in \citet{yangbayesian,li2025calibrating}. 

\paragraph{Gemma-2-9B} We use the identical setting as Llama-3.1-8B.

\begin{table}[t]
\centering
\begin{tabular}{l}
\toprule
\textbf{Domain and Subsets of MMLU} \\
\midrule
\textbf{Computer Science:} \\
\quad \texttt{college\_computer\_science}\\ 
\quad \texttt{computer\_security}\\ 
\quad \texttt{high\_school\_computer\_science}\\
\quad \texttt{machine\_learning} \\[2pt]
\textbf{Engineering:} \\
\quad \texttt{electrical\_engineering} \\[2pt]
\textbf{Math:} \\
\quad \texttt{college\_mathematics} \\
\quad \texttt{high\_school\_mathematics} \\
\quad \texttt{abstract\_algebra} \\
\bottomrule
\end{tabular}
\caption{MMLU domains and their corresponding subsets.}
\label{table:mmlu_domains}
\end{table}
\begin{table*}[h!]
    \centering
    \label{tab:placeholder}
    \resizebox{1.0\linewidth}{!}{
    \begin{tabular}{lccccc}
        \toprule
        Method & CIFAR10 $\to$ CIFAR10-OOD & ImageNet $\to$ ImageNet-OOD & OBQA $\to$ MMLU & RACE $\to$ MMLU \\
        \midrule
        MD   & \result{45.43}{1.2} / \result{56.69}{1.06} & \result{\textbf{87.54}}{0.16} / \result{77.45}{0.24}  & \result{70.5}{0.04} / \result{54.42}{0.05} & \result{87.28}{0.01} / \result{54.44}{0.02} \\
        ODIN & \result{\textbf{86.41}}{0.99} / \result{\textbf{87.29}}{0.82} & \result{79.77}{0.00} / \result{72.11}{0.00}   & \result{61.35}{0.00} / \result{50.11}{0.00} & \result{81.22}{0.00} / \result{49.65}{0.00} \\
        \rowcolor{gray!15}
        ETN & \result{85.93}{0.92}/\result{86.5}{0.97} & \result{84.78}{0.36} / \result{\textbf{79.86}}{0.49} & \result{\textbf{91.7}}{0.00} / \result{\textbf{83.39}}{0.01} & \result{\textbf{96.80}}{0.39} / \result{\textbf{87.57}}{0.01} \\ 
        \bottomrule
    \end{tabular}}
    \caption{Comparison of ETN to OOD-detection methods. We showcase both AUPR and AUROC scores, respectively. For ETN, we outline the scores based on maximum probability.}
    \label{tab:ood_baseline}
\end{table*}

\begin{table*}[t]
\centering
\resizebox{\linewidth}{!}{%
\begin{tabular}{ccccccccccccc}
\toprule
\multirow{2}{*}{Method} &
\multicolumn{2}{c}{CIFAR-10} &
{$\to$ CIFAR-OOD} &
\multicolumn{2}{c}{ImageNet} &
{$\to$ ImageNet-OOD} &
\multicolumn{2}{c}{RACE} &
{$\to$ MMLU} &
\multicolumn{2}{c}{OBQA} &
{$\to$ MMLU} \\
\cmidrule(lr){2-3}\cmidrule(lr){4-4}\cmidrule(lr){5-6}\cmidrule(lr){7-7}
\cmidrule(lr){8-9}\cmidrule(lr){10-10}\cmidrule(lr){11-12}\cmidrule(lr){13-13}
&
ACC & UE & UE &
ACC & UE & UE &
ACC & UE & UE &
ACC & UE & UE \\
\midrule
DUQ
& \result{42.25}{9.4} & \result{42.05}{9.3} & \result{54.86}{8.0}
& \result{0.09}{0.0} & \result{0.18}{0.0} & \result{69.77}{1.9}
& \result{21.52}{0.0} & \result{22.54}{0.2} & \result{74.31}{11.7}
& \result{27.60}{0.0} & \result{26.74}{1.1} & \result{50.85}{3.0} \\
SNGP
& \result{83.62}{1.4} & \result{90.75}{1.3} & \result{57.72}{4.7}
& \result{12.83}{1.0} & \result{12.83}{0.9} & \result{65.73}{0.8}
& \result{45.73}{19.5} & \result{49.97}{22.9} & \result{95.11}{1.1}
& \result{39.53}{17.8} & \result{43.74}{20.8} & \result{\textbf{94.74}}{4.3} \\
\rowcolor{gray!15}
ETN
& \result{\textbf{90.70}}{0.0} & \result{\textbf{98.99}}{0.1} & \result{\textbf{85.93}}{0.9}
& \result{\textbf{79.61}}{0.0} & \result{\textbf{88.04}}{0.1} & \result{\textbf{79.86}}{0.5}
& \result{\textbf{89.69}}{0.0} & \result{\textbf{97.60}}{0.0} & \result{\textbf{96.80}}{0.0}
& \result{\textbf{88.80}}{0.0} & \result{\textbf{97.15}}{0.0} & \result{91.70}{0.0} \\
\bottomrule
\end{tabular}%
}
\caption{Comparison of ETN with deterministic uncertainty estimation methods in terms of accuracy (ACC) and uncertainty estimation (UE), where UE is measured by AUPR. For ETN, we report UE based on maximum probability.}
\label{tab:deterministic}
\end{table*}

\subsection{Baselines}

\paragraph{Deep Ensemble (DeepEns).} We use an ensemble of three models in all settings. Each model is trained on the same dataset with a different random data order.

\paragraph{MC-Dropout (MCD).} We set the number of forward passes to 20 for all settings. For image classification setting, we use the dropout layer in the pretrained model with a dropout rate of 0.2, while we use the LoRA dropout layer for LLM with a dropout rate of 0.1. 

\paragraph{Laplace Approximation (LA).}\label{LA} We utilize the \texttt{laplace} 
library proposed in Laplace-redux~\citep{daxberger2021laplace}, which provides a integrated tools for bayesian adaptation of neural networks. As for CIFAR-10, We opted for the best setting proposed in the work, which applies laplace approximation on the last layer of the network with Kronecker-factored Generalized Gauss-Newton (GGN) matrix to the Hessian in a post-hoc manner. For ImageNet setting, we construct GGN matrix with diagonal matrix due to constrained resources. To compute distributional uncertainty, we use Monte Carlo sampling for predictive approximation, with the number of MC samples set to 20.

\paragraph{Laplace LoRA (LL).} We build GGN matrix only on all LoRA layers through Kronecker factorization, following \citet{yangbayesian}.

\paragraph{Dirichlet Meta-Model (DMM).}
For VGG16, we follow the implementation of \citet{shen2022posthocuncertaintylearningusing}.  
For ResNet50, DMM takes the final hidden states from each stage as input, with each module consisting of a max-pooling layer followed by two fully connected layers.  
For LLMs, DMM receives hidden states from all transformer layers, and each module is composed of three fully connected layers and a max-pooling layer.

\paragraph{$\text{MAP}_\text{EDL}$.}
We train the model using the reverse KL formulation of $\mathcal{L}_\text{EDL}$, as reverse KL is known to provide more stable optimization than forward KL, primarily due to its mode-seeking behavior~\cite{NEURIPS2019_7dd2ae7d,shen2024uncertainty}.

\paragraph{IB-EDL.}
We follow the original implementation from~\citet{li2025calibrating} for LLM experiments.
For image classification, we modify the architecture by doubling the dimension of the final layer to model both the mean and variance for each class.

\paragraph{Static scaling.}
We adopt the static scaling approach inspired by \citet{guo2017calibration,niculescu2005predicting,platt1999probabilistic} for all experimental settings, and train the additional parameters using the reverse KL formulation of $\mathcal{L}_{\text{EDL}}$.

\paragraph{AdaTS.}
We use the original implementation from \citet{joy2023sample} for all experimental settings, and train the additional parameters using the reverse KL formulation of $\mathcal{L}_{\text{EDL}}$.

\section{Additional Experiments}
\subsection{OOD-Detection Baselines}
In this section, we compare ETN against ODIN \citep{liang2018enhancing} and the Mahalanobis distance method (MD) \citep{lee2018simple}. 
Although neither ODIN nor MD are strictly uncertainty estimation methods, we include them as they both work in post-hoc manner, and there exists close relationship between uncertainty estimation and OOD detection \citep{gawlikowski2023survey}. 
We report both AUPR and Area Under the Receiver Operating Characteristic Curve (AUROC) metrics, and for ETN we showcase scores based on maximum probability. The results are summarized in Table~\ref{tab:ood_baseline}.

On CIFAR-10 and ImageNet, ODIN and MD achieve higher AUPR scores than ETN, respectively. 
However, on OBQA and RACE, ETN outperforms both baselines across AUPR and AUROC. 
It is also worth noting that MD requires learning class-conditional feature distributions, which becomes resource-intensive as the number of classes grows, while ODIN is highly sensitive to its hyperparameters. 
By contrast, ETN avoids these limitations by operating directly in logit space, providing a lightweight and broadly applicable alternative.

\subsection{Deterministic Deep Neural Network Baselines}
In this section, we compare ETN against deterministic deep neural network baselines that estimate uncertainty using a single model and a single forward pass. Specifically, we use DUQ~\citep{van2020uncertainty} and SNGP~\citep{liu2020simple} as baselines. The comparison results are summarized in Table~\ref{tab:deterministic}.

Across all image classification and QA settings except OBQA $\to$ MMLU, ETN consistently outperforms these baselines in uncertainty estimation without sacrificing accuracy. One possible reason is that DUQ requires a separate learnable weight matrix for each class, while SNGP requires learning a class-wise covariance structure for the Gaussian process. Such additional parameters for post-hoc adaptation can lead to overfitting when only a limited adaptation dataset is available, as also observed for other baselines such as Laplace Approximation and Dirichlet Meta Model.

\begin{figure*}[ht]
    \centering
    \begin{minipage}{0.5\textwidth}
        \centering
        \includegraphics[width=\linewidth]{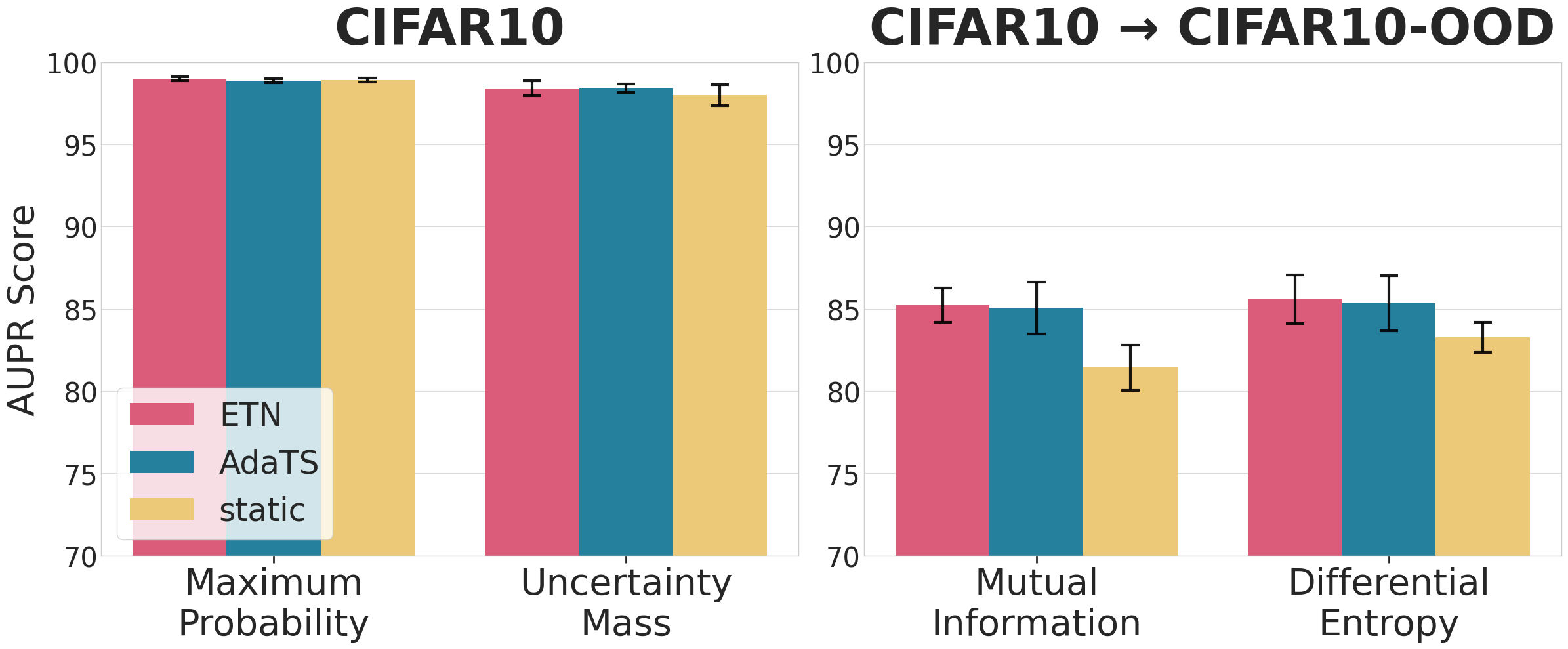}
    \end{minipage}\hfill
    \begin{minipage}{0.5\textwidth}
        \centering
        \includegraphics[width=\linewidth]{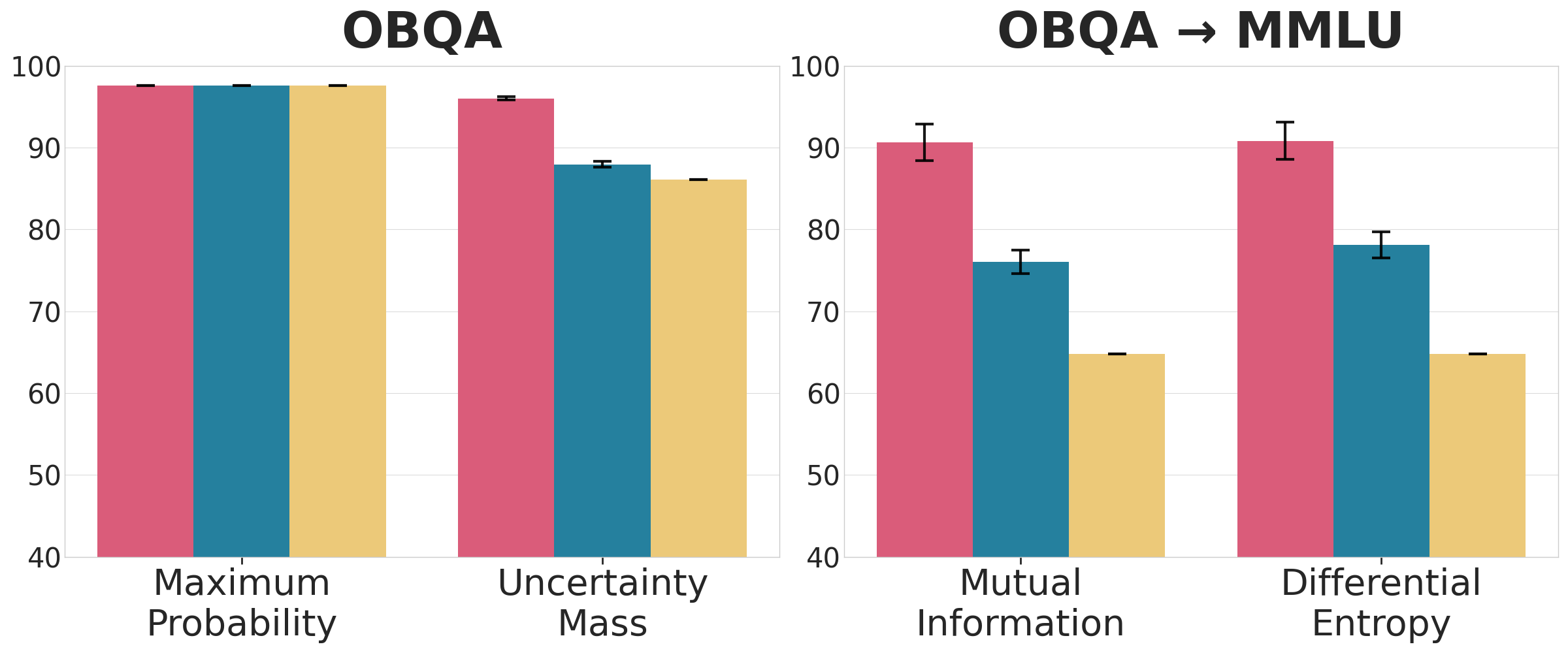}
    \end{minipage}\hfill
    \caption{Comparison of uncertainty estimation performance based on different transformation methods on CIFAR-10 and OBQA.}
    \label{fig:more_on_differnet_trans}
\end{figure*}

\begin{figure*}[ht]
    \centering
    \begin{minipage}{0.5\textwidth}
        \centering
        \includegraphics[width=\linewidth]{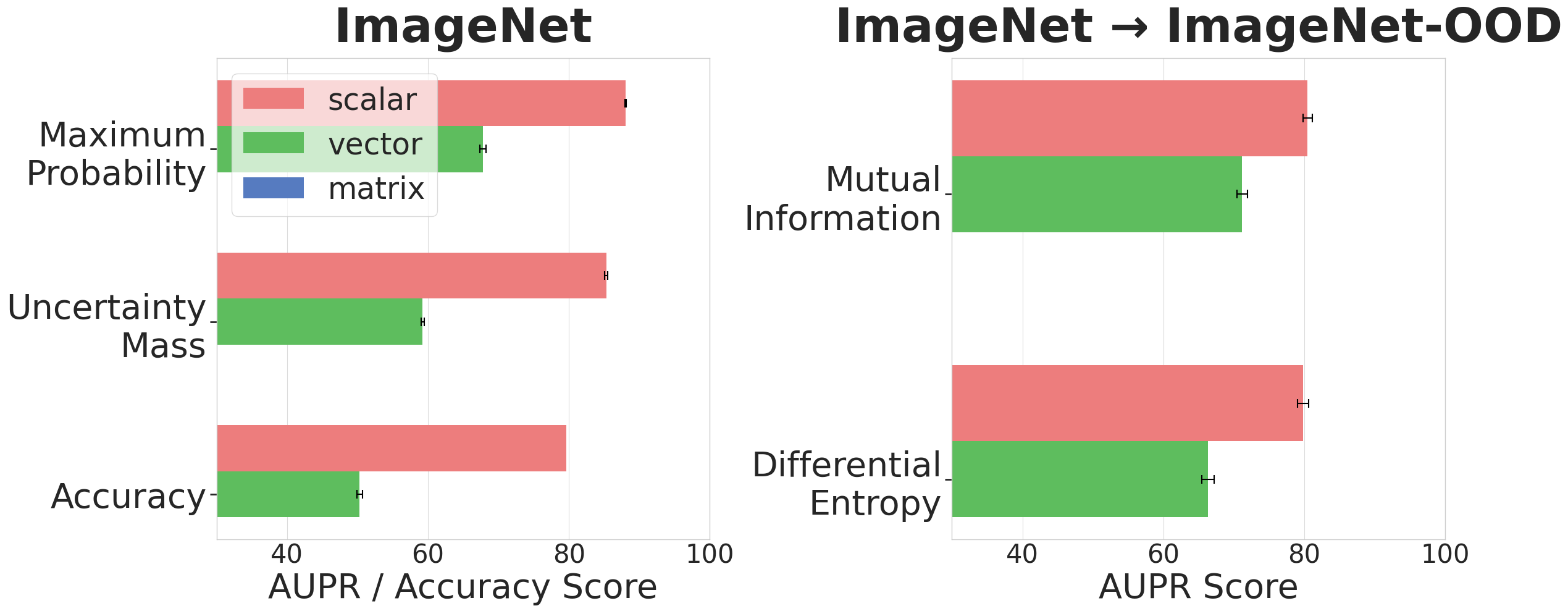}
    \end{minipage}\hfill
    \begin{minipage}{0.5\textwidth}
        \centering
        \includegraphics[width=\linewidth]{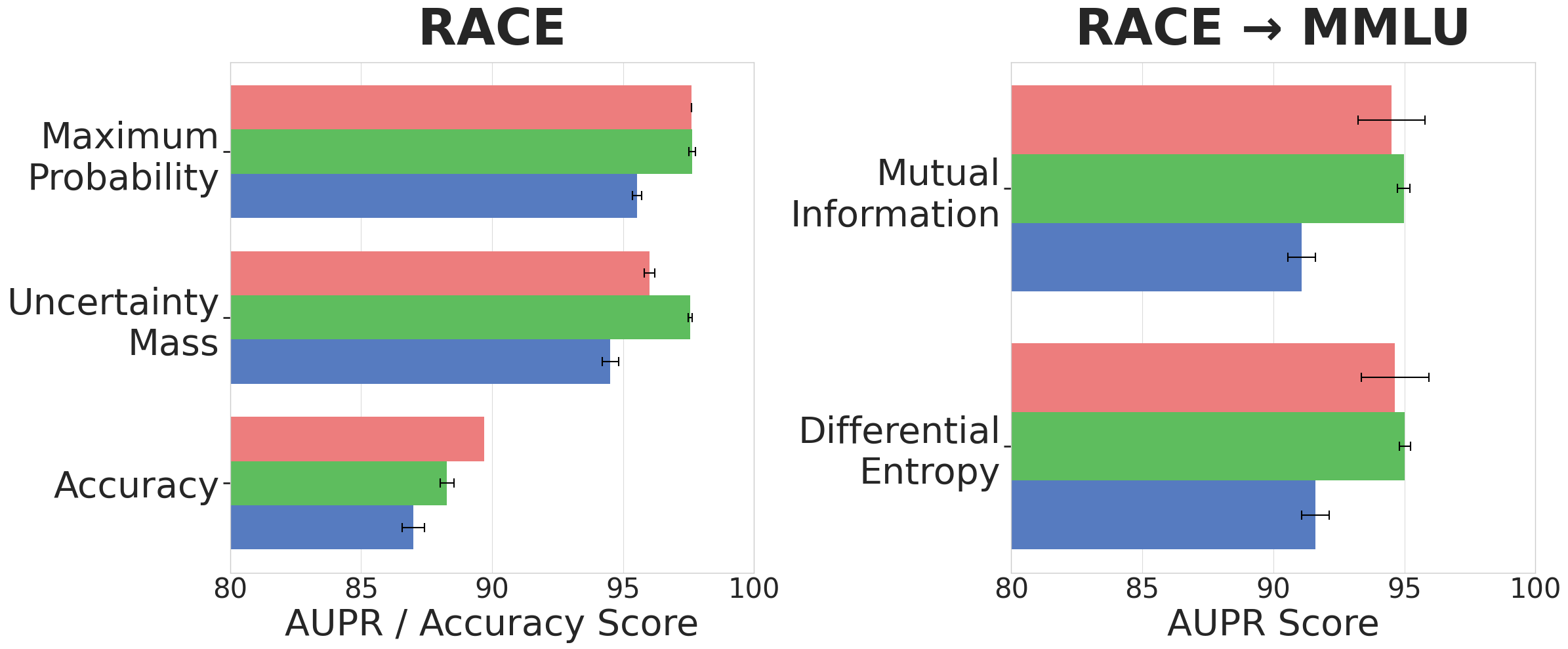}
    \end{minipage}\hfill
    \caption{Comparison of uncertainty estimation performance and accuracy across different dimensionalities of the transformation parameter $A$ modeled by ETN on ImageNet and RACE.}
\label{fig:more_on_different_dim}
\end{figure*}

\subsection{More on Comparison of Transformation Methods}
In this section, we present additional results on CIFAR-10 and OBQA, comparing different transformation strategies—specifically, static scaling and AdaTS. The results are shown in Figure~\ref{fig:more_on_differnet_trans}. Since CIFAR-10 is considerably smaller and simpler than the other datasets we evaluate, all three methods—static scaling, AdaTS, and ETN—achieve reasonably strong uncertainty estimation performance. AdaTS performs on par with ETN in terms of mutual information, while static scaling trails ETN by roughly $5\%$.

On OBQA, however, the differences between methods become more pronounced. Both static scaling and AdaTS exhibit substantially lower mutual information compared to ETN, with margins of approximately $13.6\%$ and $24.7\%$, respectively. These results highlight two key observations: (1) modeling sample-dependent transformation parameters is crucial for reliable uncertainty estimation, and (2) among sample-dependent approaches, our variational inference framework more effectively transforms logits to produce high-quality evidential uncertainty estimates.

\subsection{More on Comparison Across Transformation Dimensionalities}

In this section, we take a closer look at how the dimensionality of the transformation parameter $A$ affects uncertainty estimation performance across different transformation methods.

\paragraph{Results of ETN.}
We first analyze the behavior of ETN. For ImageNet, we exclude the matrix case since the corresponding covariance matrix would contain on the order of $10^{12}$ entries, which is intractable to store in GPU memory. The results are shown in Figure~\ref{fig:more_on_different_dim}.

On ImageNet, the scalar configuration outperforms the vector configuration for both confidence estimation and OOD detection. In contrast, on RACE, the vector configuration achieves the best performance on both ID and OOD metrics.

\paragraph{Results of static scaling.}
We next consider static scaling with different dimensionalities of $A$. The results are presented in Figure~\ref{fig:more_on_differnet_trans_with_dims_vector}.

For static scaling, all dimensionalities yield broadly similar uncertainty estimation performance on most datasets. An exception is ImageNet, where the maximum predicted probability tends to decrease as dimensionality increases, while OOD detection performance improves.

\paragraph{Results of AdaTS.}
Finally, we evaluate on AdaTS, with results summarized in Figure~\ref{fig:more_on_differnet_trans_with_dims_matrix}. In this case, higher-dimensional variants generally improve OOD detection compared to the scalar configuration. However, the behavior in confidence estimation is less consistent: maximum probability typically decreases while uncertainty mass increases, with ImageNet showing particularly irregular trends.

\paragraph{Discussion.}
Across ETN, static scaling, and AdaTS, a consistent trend emerges: increasing the dimensionality of the transformation parameter $A$ tends to degrade predictive accuracy and introduces a clear trade-off between OOD detection performance and core predictive capability. Moreover, none of the higher-dimensional variants—including the matrix formulation that operates directly in \emph{Dirichlet space}—surpasses scalar-based ETN across all datasets and metrics, with the sole exception of the maximum probability metric on ImageNet. Taken together, these results suggest that a simple scalar-based transformation within ETN offers the most effective and practical balance for adapting pretrained models to the EDL framework.

\subsection{AUPR Scores on Gemma-2-9B}
To further assess the robustness of ETN across different pretrained architectures, we evaluate its AUPR performance on Gemma-2-9B using OBQA and RACE. The results are shown in Table~\ref{tab:aupr_gemma}. Consistent with our findings on Llama-3.1, ETN delivers the strongest uncertainty estimation performance while fully preserving the model’s predictive accuracy. These results provide additional evidence that ETN generalizes effectively across diverse large-scale pretrained models.

\subsection{Uncertainty Estimation Performance Based on AUROC Scores}
Although recent works increasingly adopt AUPR as the primary metric for evaluating uncertainty estimation capability~\cite{deng2023uncertaintyestimationfisherinformationbased,chen2024r,yoon2024uncertainty}, we additionally report AUROC scores for image classification in Table~\ref{tab:auroc_image} and for LLMs in Table~\ref{tab:auroc_llm}.

On CIFAR-10, we observe that ETN outperforms all baselines across all AUROC-based metrics, showcasing its robustness across different uncertainty evaluation criteria.

For ImageNet, AUROC trends largely mirror those observed with AUPR. ETN remains competitive in OOD detection, while Laplace Approximation attains slightly higher AUROC for mutual information in some settings. In confidence estimation, we observe that DMM attains unusually high AUROC scores relative to its accuracy and AUPR. Upon inspecting its predictions, we find that DMM often produces nearly uniform predictive distributions with low $\alpha_0$, indicating uniformly high uncertainty across both ID and OOD inputs. This suggests that the inflated AUROC scores do not reflect reliable or informative confidence estimates.

For RACE and OBQA, ETN achieves the strongest OOD detection performance for both Llama-3.1 and Gemma-2. Moreover, in every setting, at least one confidence estimation metric (maximum probability or uncertainty mass) is maximized by ETN.

Although ETN is less dominant under AUROC than under AUPR, it remains competitive with strong baselines across all AUROC metrics while clearly outperforming them on our primary metric, AUPR. Overall, these results support ETN as an effective and practical method for uncertainty estimation in pretrained models.

\begin{figure}
    \centering
    \includegraphics[width=0.55\linewidth]{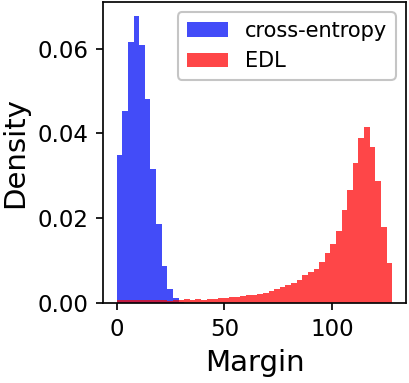}
    \caption{Histograms of logit margins for models trained with EDL and CE.}
    \label{fig:margin}
\end{figure}

\subsection{Empirical Comparison of Margins between EDL- and CE-Pretrained Models}
In this section, we empirically examine the logit margins of EDL- and CE-pretrained models to assess whether enlarging margins during the transformation process, as done by ETN, is a justified approach.
For this experiment, we use VGG16 on CIFAR-10. We compare a model trained from scratch with $\mathcal{L}_\text{EDL}$, where $\lambda = 0.01$ and $f(\cdot)=\mathrm{softplus}$, against a model trained from scratch with $\mathcal{L}_\text{CE}$. The remaining pretraining settings follow those in Table~\ref{tab:hyper_train}. Margins are computed on the CIFAR-10 training set.

Figure~\ref{fig:margin} shows the resulting margin histograms. The EDL-pretrained model exhibits larger margins than the CE-pretrained model. Together with Corollary~1, this result supports the validity of enlarging logit margins during the transformation process.

\begin{figure*}[ht]
    \centering
    \begin{minipage}{0.5\textwidth}
        \centering
        \includegraphics[width=\linewidth]{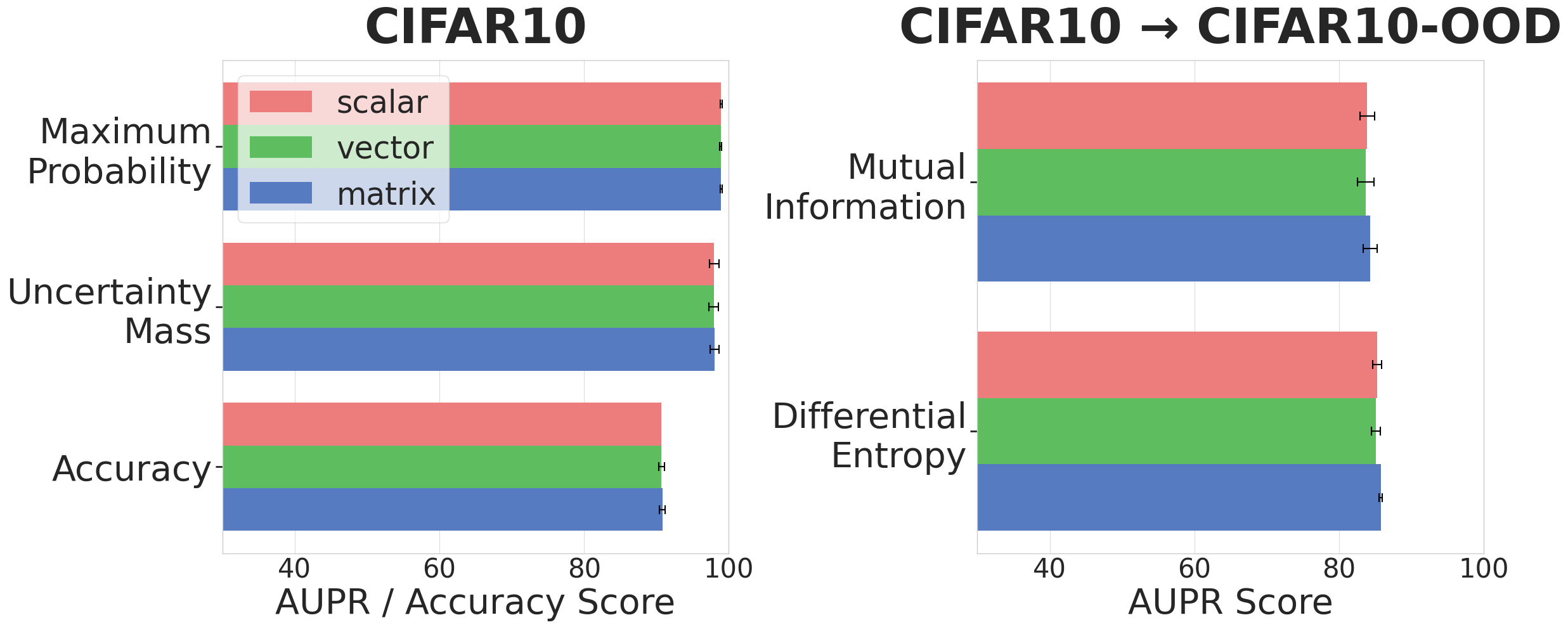}
    \end{minipage}\hfill
    \begin{minipage}{0.5\textwidth}
        \centering
        \includegraphics[width=\linewidth]{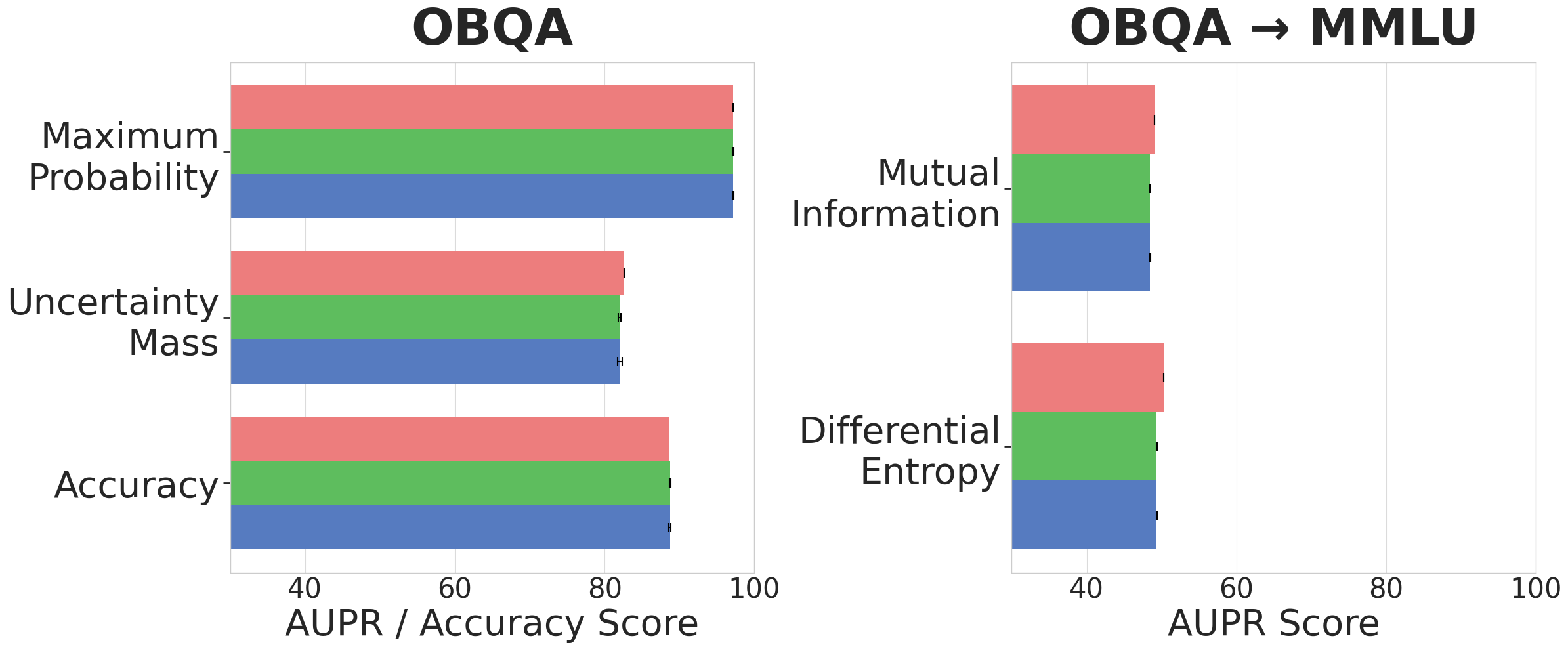}
    \end{minipage}\hfill
    \vspace{1em} 
    \begin{minipage}{0.5\textwidth}
        \centering
        \includegraphics[width=\linewidth]{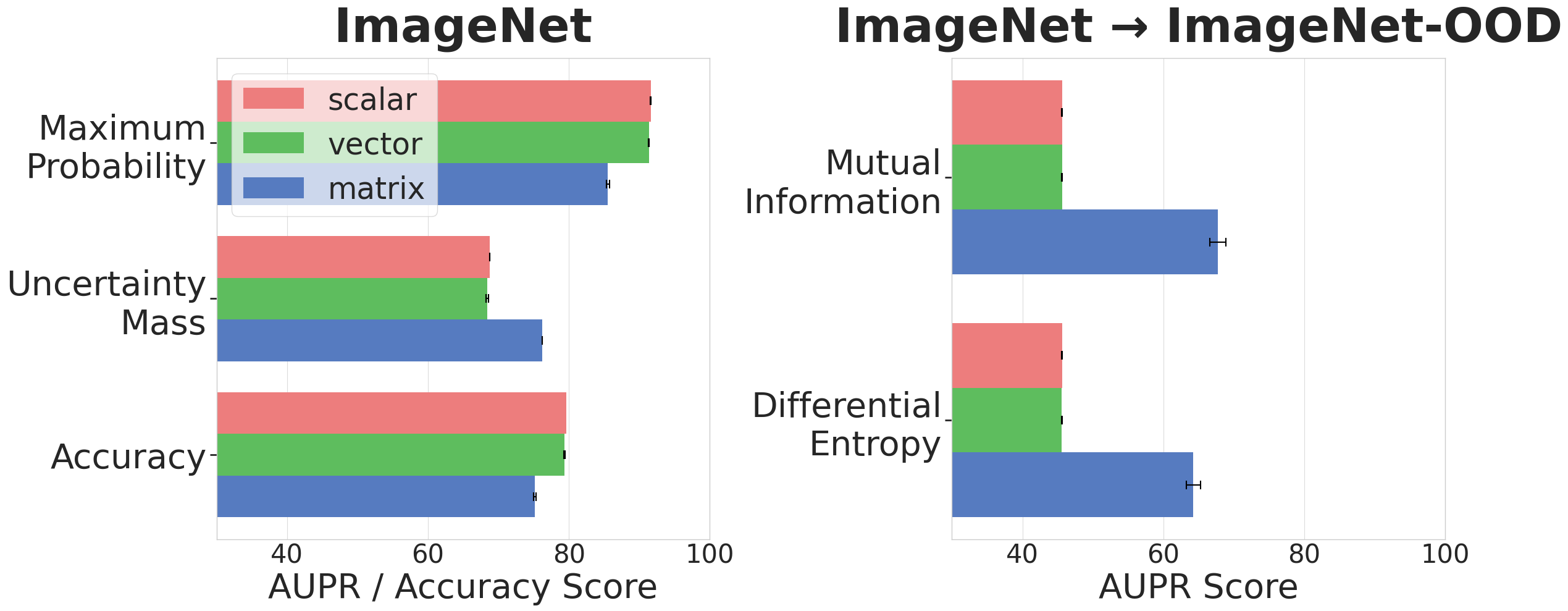}
    \end{minipage}\hfill
    \begin{minipage}{0.5\textwidth}
        \centering
        \includegraphics[width=\linewidth]{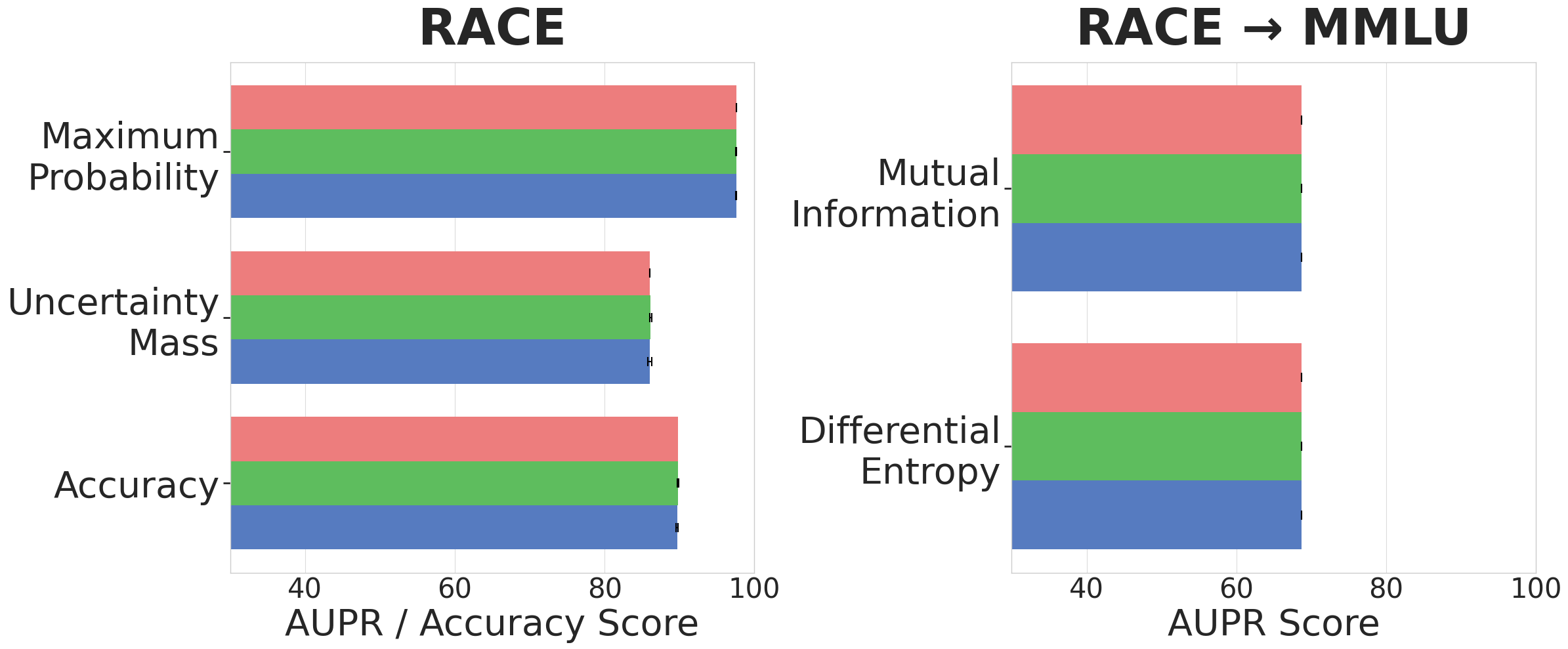}
    \end{minipage}\hfill
    \caption{Comparison of uncertainty estimation performance and accuracy across different dimensionalities of the transformation parameter $A$ modeled by \textbf{static} scaling.}
\label{fig:more_on_differnet_trans_with_dims_vector}
\end{figure*}

\begin{figure*}[ht]
    \centering
    \begin{minipage}{0.5\textwidth}
        \centering
        \includegraphics[width=\linewidth]{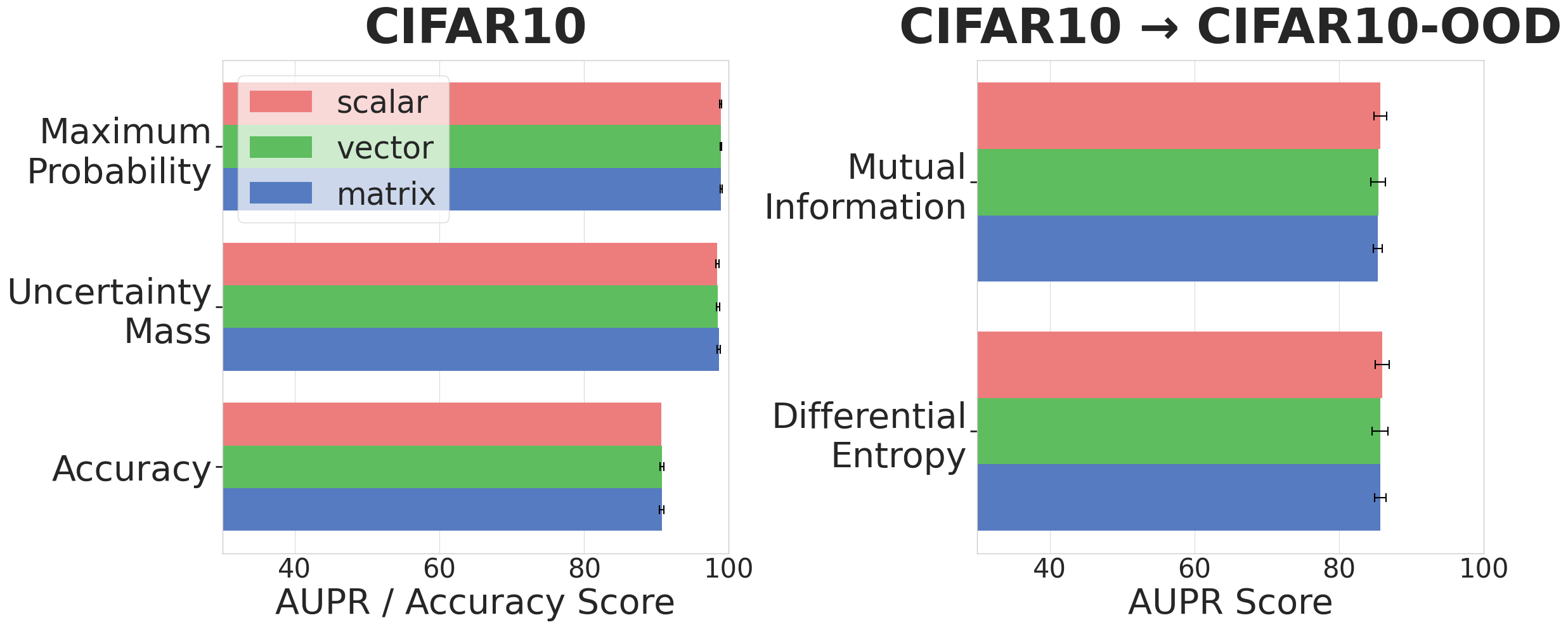}
    \end{minipage}\hfill
    \begin{minipage}{0.5\textwidth}
        \centering
        \includegraphics[width=\linewidth]{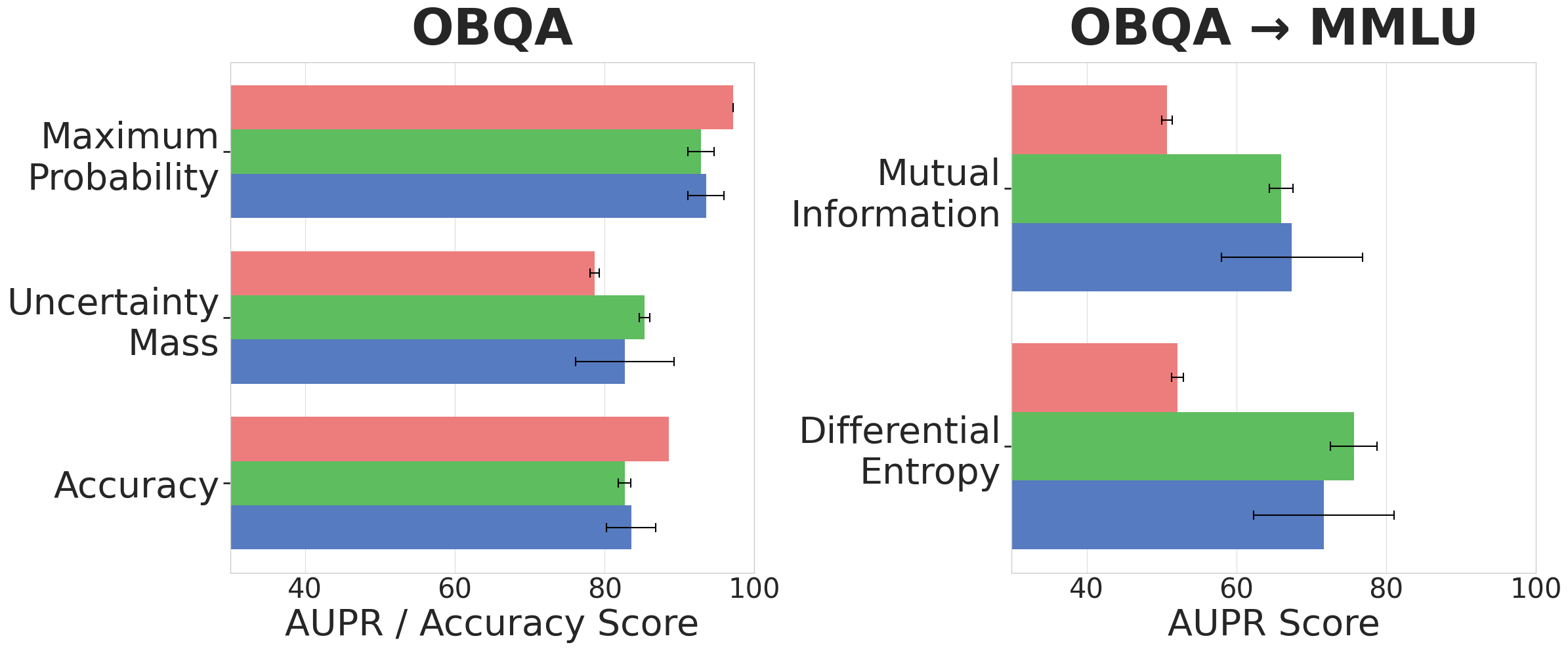}
    \end{minipage}\hfill
    \vspace{1em} 
    \begin{minipage}{0.5\textwidth}
        \centering
        \includegraphics[width=\linewidth]{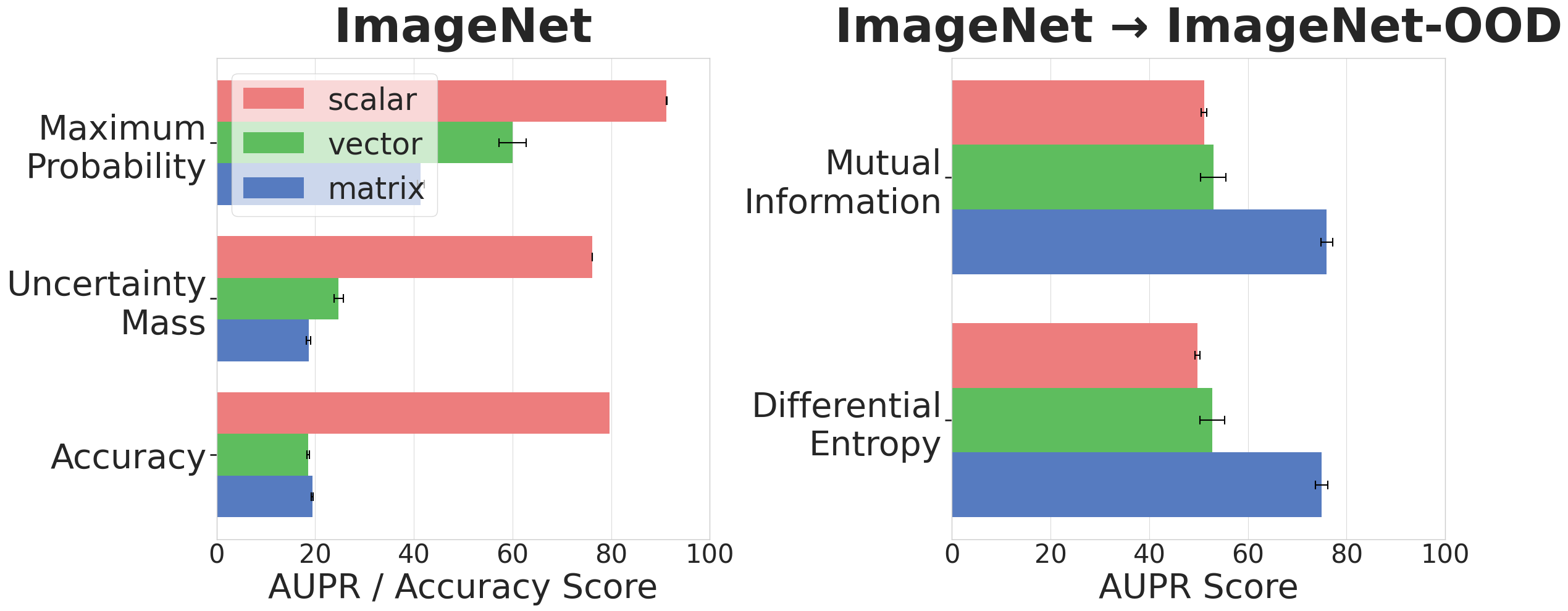}
    \end{minipage}\hfill
    \begin{minipage}{0.5\textwidth}
        \centering
        \includegraphics[width=\linewidth]{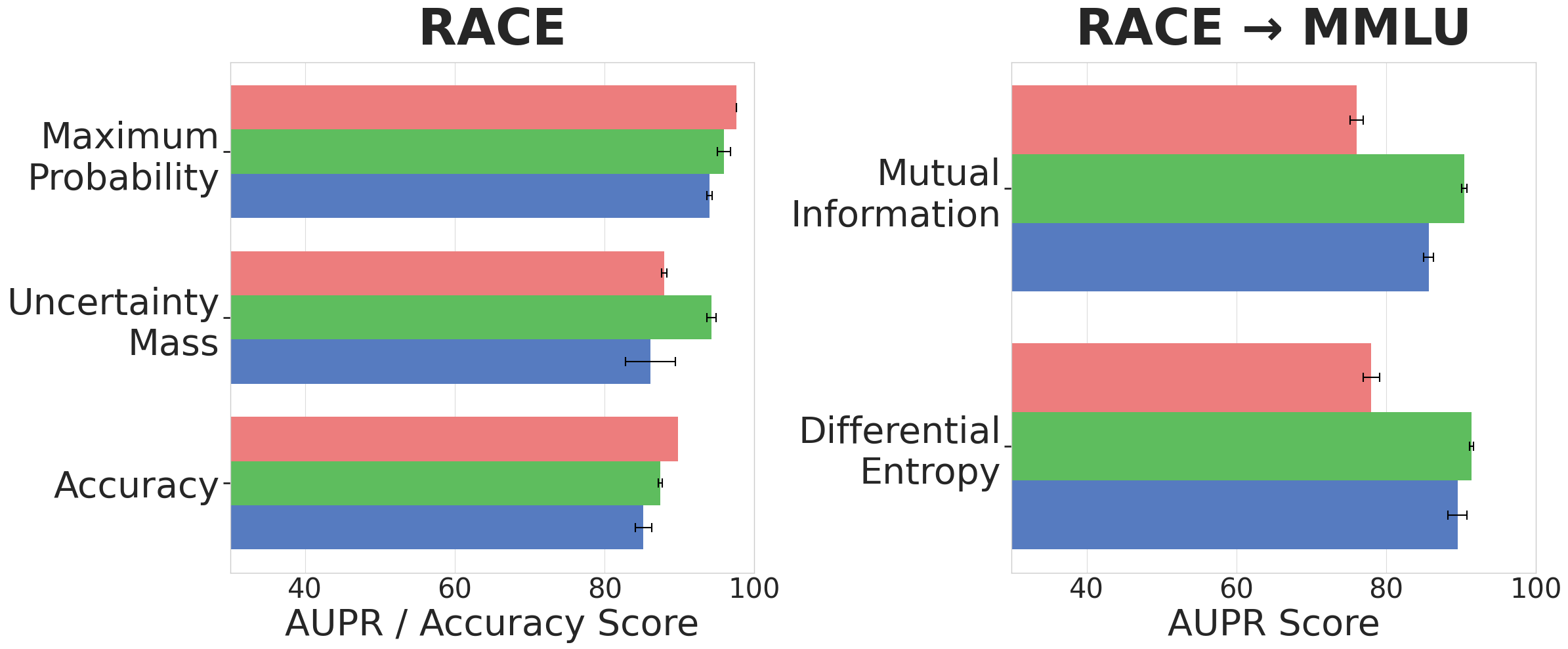}
    \end{minipage}\hfill
    \caption{Comparison of uncertainty estimation performance and accuracy across different dimensionalities of the transformation parameter $A$ modeled by \textbf{AdaTS}.}
\label{fig:more_on_differnet_trans_with_dims_matrix}
\end{figure*}
\begin{table*}[t]
\centering
\setlength{\tabcolsep}{5pt}

\resizebox{0.9\linewidth}{!}{
\begin{tabular}{lccc ccc ccc ccc}
\toprule
\multirow{2}{*}{Method}
  & \multicolumn{3}{c}{RACE}
  & \multicolumn{3}{c}{RACE $\to$ MMLU}
  & \multicolumn{3}{c}{OBQA}
  & \multicolumn{3}{c}{OBQA $\to$ MMLU} \\
\cmidrule(lr){2-4} \cmidrule(lr){5-7} \cmidrule(lr){8-10} \cmidrule(lr){11-13}
& ACC & MP & UM
& MP & MI & DE
& ACC & MP & UM
& MP & MI & DE \\
\midrule
$\mathrm{MAP}_{\text{CE}}$
  & \result{88.81}{0.2} & \result{98.16}{0.1} & --
  & \result{96.03}{0.1} & -- & -- 
  & \result{88.87}{0.6} & \result{97.59}{0.2} & --
  & \result{\underline{85.85}}{0.1} & -- & -- \\

DeepEns
  & \result{89.09}{0.1} 
  & \result{98.26}{0.0} 
  & --
  & \result{94.64}{0.2} 
  & \result{93.79}{0.4} 
  & -- 
  & \result{89.00}{0.0} 
  & \result{97.84}{0.3}  
  & -- 
  & \result{83.04}{0.9} 
  & \result{78.82}{0.5} 
  & -- \\
  
MCD
  & \result{89.38}{0.4} 
  & \result{\underline{98.40}}{0.2} 
  & --
  & \result{\underline{96.36}}{0.3} 
  & \result{\textbf{95.36}}{0.4} 
  & -- 
  & \result{89.60}{0.5} 
  & \result{\underline{97.87}}{0.5} 
  & -- 
  & \result{85.59}{1.4} 
  & \result{\underline{81.81}}{1.8} 
  & -- \\

LL
  & \result{88.51}{0.2} 
  & \result{69.05}{0.3} 
  & --
  & \result{27.47}{1.4} 
  & \result{27.03}{1.0} 
  & -- 
  & \result{92.53}{0.2} 
  & \result{79.45}{0.6} 
  & --  
  & \result{25.32}{0.6} 
  & \result{26.25}{0.8} 
  & -- \\

$\mathrm{MAP}_{\text{EDL}}$
  & \result{87.14}{0.2} 
  & \result{92.27}{0.7} 
  & \result{87.60}{1.1} 
  & \result{93.97}{1.5} 
  & \result{83.89}{1.1} 
  & \result{90.36}{1.2}
  & \result{87.53}{0.4} 
  & \result{94.63}{0.9} 
  & \result{86.32}{3.3} 
  & \result{80.28}{2.7} 
  & \result{63.50}{1.5} 
  & \result{79.78}{2.6} \\

DMM
  & \result{\underline{89.42}}{0.2} 
  & \result{97.96}{0.1} 
  & \result{\underline{95.20}}{0.4}
  & \result{93.84}{0.3} 
  & \result{92.77}{0.6} 
  & \result{\underline{93.80}}{0.3}
  & \result{\underline{93.00}}{0.6} 
  & \result{96.68}{0.0} 
  & \result{\underline{95.44}}{0.1} 
  & \result{81.48}{0.5} 
  & \result{80.53}{0.4} 
  & \result{\underline{81.51}}{0.5} \\

IB-EDL
  & \result{87.98}{0.3} 
  & \result{93.72}{0.1} 
  & \result{88.04}{0.4}
  & \result{90.67}{0.9} 
  & \result{89.43}{1.5} 
  & \result{90.70}{1.1}
  & \result{85.13}{0.4} 
  & \result{94.60}{0.4} 
  & \result{92.15}{0.1} 
  & \result{79.99}{0.8} 
  & \result{66.58}{2.1} 
  & \result{74.28}{1.7} \\
\rowcolor{gray!15}
ETN
  & \result{\textbf{89.48}}{0.0} 
  & \result{\textbf{98.43}}{0.0} 
  & \result{\textbf{95.94}}{0.1} 
  & \result{\textbf{96.70}}{0.0} 
  & \result{\underline{95.21}}{0.3} 
  & \result{\textbf{95.29}}{0.3}
  & \result{\textbf{93.20}}{0.0} 
  & \result{\textbf{98.06}}{0.0} 
  & \result{\textbf{96.00}}{0.1} 
  & \result{\textbf{86.95}}{0.1} 
  & \result{\textbf{82.69}}{0.2} 
  & \result{\textbf{84.39}}{0.2} \\
\bottomrule
\end{tabular}
}
\caption{AUPR scores of Gemma-2-9B on RACE and OBQA, using MMLU subsets as OOD data.}
\label{tab:aupr_gemma}
\end{table*}

\begin{table*}[t]
\centering
\setlength{\tabcolsep}{5pt}

\begin{subtable}[t]{\textwidth}
\centering
\resizebox{0.7\linewidth}{!}{
\begin{tabular}{lcc ccc ccc}
\toprule
\multirow{2}{*}{Method}
  & \multicolumn{2}{c}{CIFAR-10}
  & \multicolumn{3}{c}{CIFAR-10 $\to$ SVHN}
  & \multicolumn{3}{c}{CIFAR-10 $\to$ CIFAR-100} \\
\cmidrule(lr){2-3} \cmidrule(lr){4-6} \cmidrule(lr){7-9}
& MP & UM
& MP & MI & DE
& MP & MI & DE \\
\midrule
$\mathrm{MAP}_{\text{CE}}$
  & \result{87.71}{0.1} & --
  & \result{73.26}{5.6} & -- & --
  & \result{79.10}{2.3} & -- & -- \\

DeepEns
  & \result{\underline{90.78}}{0.5} & --
  & \result{\underline{87.43}}{1.1} & \result{49.19}{0.2} & --
  & \result{\underline{83.63}}{0.8} & \result{49.39}{0.3} & -- \\
  
MCD
  & \result{77.67}{5.4} & --
  & \result{73.26}{4.2} & \result{78.56}{1.4} & --
  & \result{68.72}{6.2} & \result{75.62}{2.5} & -- \\

LA
  & \result{90.23}{0.3} & --
  & \result{84.22}{0.1} & \result{\underline{83.46}}{0.2} & --
  & \result{82.38}{0.2} & \result{81.93}{0.2} & -- \\

$\mathrm{MAP}_{\text{EDL}}$
  & \result{87.91}{0.5} & \result{\underline{86.98}}{0.8}
  & \result{81.61}{0.7} & \result{81.58}{3.1} & \result{81.93}{1.9}
  & \result{80.27}{0.7} & \result{\underline{82.24}}{0.4} & \result{81.89}{0.4} \\

DMM
  & \result{90.65}{1.2} & \result{83.36}{1.9}
  & \result{85.68}{2.6} & \result{80.20}{9.7} & \result{\underline{84.99}}{6.6}
  & \result{79.25}{3.0} & \result{79.69}{5.5} & \result{\underline{82.43}}{4.0} \\

IB-EDL
  & \result{87.58}{1.3} & \result{86.19}{1.0}
  & \result{78.22}{2.9} & \result{76.72}{3.0} & \result{77.64}{3.0}
  & \result{79.39}{1.7} & \result{79.41}{1.2} & \result{79.76}{1.5} \\
\rowcolor{gray!15}
ETN
  & \result{\textbf{91.80}}{0.9} & \result{\textbf{87.98}}{2.8}
  & \result{\textbf{88.60}}{1.4} & \result{\textbf{88.40}}{0.9} & \result{\textbf{88.79}}{1.2}
  & \result{\textbf{84.40}}{0.5} & \result{\textbf{84.50}}{0.6} & \result{\textbf{84.84}}{0.5} \\
\bottomrule
\end{tabular}}
\end{subtable}

\vspace{0.75em}
\begin{subtable}[t]{\textwidth}
\centering
\resizebox{0.9\linewidth}{!}{
\begin{tabular}{lcc ccc ccc ccc}
\toprule
\multirow{2}{*}{Method}
  & \multicolumn{2}{c}{ImageNet}
  & \multicolumn{3}{c}{ImageNet $\to$ ImageNet-A}
  & \multicolumn{3}{c}{ImageNet $\to$ ImageNet-S}
  & \multicolumn{3}{c}{ImageNet $\to$ ImageNet-R} \\
\cmidrule(lr){2-3} \cmidrule(lr){4-6} \cmidrule(lr){7-9} \cmidrule(lr){10-12}
& MP & UM
& MP & MI & DE
& MP & MI & DE
& MP & MI & DE \\
\midrule
$\mathrm{MAP}_{\text{CE}}$
  & \result{80.28}{0.2} & --
  & \result{70.51}{0.2} & -- & -- 
  & \result{67.78}{2.9} & -- & -- 
  & \result{67.43}{1.0} & -- & -- \\

DeepEns
  & \result{63.83}{0.4} & --
  & \result{53.39}{2.3} & \result{49.63}{0.1} & -- 
  & \result{26.96}{2.7} & \result{50.15}{0.3} & -- 
  & \result{54.25}{1.9} & \result{50.01}{0.1} & -- \\
  
MCD
  & \result{78.90}{0.4} & --
  & \result{67.64}{1.6} & \result{50.17}{0.2} & -- 
  & \result{60.11}{3.5} & \result{50.12}{0.0} & -- 
  & \result{66.94}{0.8} & \result{50.02}{0.1} & -- \\

LA
  & \result{49.16}{1.5} & --
  & \result{\underline{78.39}}{0.1} & \result{\textbf{83.55}}{0.0} & -- 
  & \result{\underline{75.58}}{0.1} & \result{\textbf{81.30}}{0.0} & -- 
  & \result{\underline{75.39}}{0.1} & \result{\textbf{79.97}}{0.0} & -- \\

$\mathrm{MAP}_{\text{EDL}}$
  & \result{72.72}{0.3} & \result{55.10}{0.3}
  & \result{70.80}{0.8} & \result{66.07}{1.1} & \result{\underline{66.49}}{1.1}
  & \result{68.83}{5.3} & \result{69.50}{4.6} & \result{\underline{69.90}}{4.7}
  & \result{68.11}{2.3} & \result{71.41}{2.4} & \result{\underline{71.55}}{2.4} \\

DMM
  & \result{\textbf{92.54}}{0.4} & \result{\textbf{91.86}}{0.4}
  & \result{48.30}{0.4} & \result{48.38}{0.4} & \result{48.36}{0.3}
  & \result{43.48}{3.6} & \result{43.82}{3.4} & \result{43.77}{3.4}
  & \result{48.15}{2.0} & \result{48.24}{1.9} & \result{48.25}{1.9} \\

IB-EDL
  & \result{\underline{91.05}}{0.1} & \result{48.70}{0.1}
  & \result{55.50}{1.8} & \result{50.03}{0.2} & \result{48.84}{0.3}
  & \result{54.65}{1.6} & \result{49.00}{0.7} & \result{49.84}{0.2}
  & \result{55.03}{1.4} & \result{49.63}{0.3} & \result{49.42}{0.1} \\
  
\rowcolor{gray!15}
ETN
  & \result{69.32}{0.3} & \result{\underline{64.78}}{0.4}
  & \result{\textbf{81.81}}{0.2} & \result{\underline{77.21}}{0.1} & \result{\textbf{76.35}}{0.9}
  & \result{\textbf{78.34}}{0.6} & \result{\underline{74.08}}{0.9} & \result{\textbf{73.44}}{1.4}
  & \result{\textbf{79.42}}{0.7} & \result{\underline{75.63}}{1.0} & \result{\textbf{74.88}}{1.6} \\
\bottomrule
\end{tabular}}
\end{subtable}
\caption{AUROC scores on CIFAR-10, SVHN, and CIFAR-100 (top), and on ImageNet, ImageNet-A, ImageNet-S, and ImageNet-R (bottom).}
\label{tab:auroc_image}
\end{table*}

\begin{table*}[t]
\centering
\setlength{\tabcolsep}{5pt}

\begin{subtable}[t]{\textwidth}
\centering
\resizebox{0.9\linewidth}{!}{
\begin{tabular}{lcc ccc cc ccc}
\toprule
\multirow{2}{*}{Method}
  & \multicolumn{2}{c}{RACE}
  & \multicolumn{3}{c}{RACE $\to$ MMLU}
  & \multicolumn{2}{c}{OBQA}
  & \multicolumn{3}{c}{OBQA $\to$ MMLU} \\
\cmidrule(lr){2-3} \cmidrule(lr){4-6} \cmidrule(lr){7-8} \cmidrule(lr){9-11}
& MP & UM
& MP & MI & DE
& MP & UM
& MP & MI & DE \\
\midrule
$\mathrm{MAP}_{\text{CE}}$
  & \result{\textbf{87.59}}{0.4} & --
  & \result{\underline{87.02}}{0.9} & -- & -- 
  & \result{81.94}{0.9} & --
  & \result{\underline{83.48}}{1.3} & -- & -- \\

DeepEns
  & \result{86.03}{0.5} & --
  & \result{80.76}{0.8} & \result{73.49}{2.0} & -- 
  & \result{80.61}{0.5} & --
  & \result{80.33}{1.4} & \result{75.67}{0.8} & -- \\
  
MCD
  & \result{\underline{87.11}}{0.0} & --
  & \result{86.39}{0.01} & \result{\textbf{85.17}}{0.8} & -- 
  & \result{\underline{83.02}}{0.2} & --
  & \result{81.96}{0.0} & \result{69.58}{0.5} & -- \\

LL
  & \result{45.04}{0.8} & --
  & \result{49.82}{1.9} & \result{53.43}{1.3} & -- 
  & \result{41.32}{0.2} & --  
  & \result{47.43}{0.7} & \result{46.37}{0.6} & -- \\

$\mathrm{MAP}_{\text{EDL}}$
  & \result{70.10}{0.7} & \result{61.71}{2.2} 
  & \result{70.22}{0.9} & \result{65.15}{1.5} & \result{70.16}{0.9}
  & \result{64.44}{1.7} & \result{48.60}{0.2}
  & \result{72.17}{1.7} & \result{66.99}{2.2} & \result{72.53}{1.9} \\

DMM
  & \result{82.34}{0.4} & \result{75.21}{2.8}
  & \result{77.88}{2.0} & \result{76.85}{0.7} & \result{78.89}{1.6}
  & \result{77.12}{1.1} & \result{\underline{69.32}}{3.7}
  & \result{78.98}{0.6} & \result{\underline{73.51}}{2.5} & \result{78.47}{0.4} \\

IB-EDL
  & \result{82.85}{0.4} & \result{\underline{77.21}}{0.6}
  & \result{80.48}{1.6} & \result{66.86}{2.9} & \result{\underline{80.17}}{1.7}
  & \result{79.98}{0.6} & \result{63.71}{0.9}
  & \result{82.83}{1.2} & \result{67.89}{2.1} & \result{\underline{83.16}}{1.2} \\
\rowcolor{gray!15}
ETN
  & \result{84.96}{0.0} & \result{\textbf{77.69}}{0.5}
  & \result{\textbf{87.57}}{0.0} & \result{\underline{81.42}}{3.0} & \result{\textbf{81.81}}{3.1}
  & \result{\textbf{83.21}}{0.0} & \result{\textbf{73.60}}{1.0}
  & \result{\textbf{87.09}}{0.0} & \result{\textbf{85.30}}{1.2} & \result{\textbf{86.27}}{0.9} \\
\bottomrule
\end{tabular}
}
\end{subtable}

\vspace{0.75em}

\begin{subtable}[t]{\textwidth}
\centering
\resizebox{0.9\linewidth}{!}{
\begin{tabular}{lcc ccc cc ccc}
\toprule
\multirow{2}{*}{Method}
  & \multicolumn{2}{c}{RACE}
  & \multicolumn{3}{c}{RACE $\to$ MMLU}
  & \multicolumn{2}{c}{OBQA}
  & \multicolumn{3}{c}{OBQA $\to$ MMLU} \\
\cmidrule(lr){2-3} \cmidrule(lr){4-6} \cmidrule(lr){7-8} \cmidrule(lr){9-11}
& MP & UM
& MP & MI & DE
& MP & UM
& MP & MI & DE \\
\midrule
$\mathrm{MAP}_{\text{CE}}$
  & \result{89.39}{0.3} & -- 
  & \result{84.74}{0.2} & -- & -- 
  & \result{81.35}{0.2} & -- 
  & \result{\underline{83.48}}{1.3} & -- & -- \\
  
DeepEns
  & \result{89.41}{0.4} & --
  & \result{80.13}{0.7} & \result{77.11}{1.1} & -- 
  & \result{\textbf{88.03}}{1.7} & --
  & \result{75.44}{1.2} & \result{67.04}{1.2} & -- \\
  
MCD
  & \result{\underline{90.45}}{0.4} & -- 
  & \result{\underline{85.50}}{1.0} & \result{\underline{81.61}}{0.8} & -- 
  & \result{81.83}{1.8} & -- 
  & \result{78.99}{1.5} & \result{\underline{75.31}}{1.7} & -- \\

LL
  & \result{36.16}{0.8} & -- 
  & \result{52.36}{1.8} & \result{51.37}{1.8} & -- 
  & \result{29.94}{1.0} & -- 
  & \result{47.43}{0.7} & \result{51.54}{0.7} & -- \\

$\mathrm{MAP}_{\text{EDL}}$
  & \result{71.43}{3.0} & \result{61.76}{2.0}
  & \result{80.65}{3.4} & \result{64.75}{3.9} & \result{77.12}{3.4}
  & \result{76.90}{2.5} & \result{56.79}{9.8}
  & \result{73.38}{3.7} & \result{59.13}{1.5} & \result{73.41}{3.9} \\

DMM
  & \result{88.96}{0.4} & \result{\textbf{84.28}}{1.6}
  & \result{79.72}{0.7} & \result{76.31}{1.4} & \result{\underline{79.48}}{0.7}
  & \result{77.97}{1.5} & \result{\textbf{75.12}}{2.2}
  & \result{78.98}{0.6} & \result{73.51}{2.5} & \result{\underline{78.47}}{0.4} \\

IB-EDL
  & \result{78.58}{1.0} & \result{62.83}{0.6}
  & \result{74.67}{1.5} & \result{68.19}{3.5} & \result{72.94}{2.4}
  & \result{80.67}{0.7} & \result{\underline{69.30}}{0.5}
  & \result{73.78}{1.0} & \result{61.76}{2.8} & \result{68.66}{2.1} \\
\rowcolor{gray!15}
ETN
  & \result{\textbf{91.30}}{0.0} & \result{\underline{80.97}}{1.2}
  & \result{\textbf{87.00}}{0.0} & \result{\textbf{83.12}}{0.8} & \result{\textbf{83.31}}{0.8}
  & \result{\underline{82.68}}{0.0} & \result{64.87}{0.7}
  & \result{\textbf{85.67}}{0.0} & \result{\textbf{78.82}}{0.3} & \result{\textbf{80.92}}{0.3} \\
\bottomrule
\end{tabular}
}
\end{subtable}
\caption{AUROC scores on on RACE and OBQA, using MMLU subsets as OOD data. Results on Llama-3.1-8B is reported (top), and Gemma-2-9B is reported (bottom).}
\label{tab:auroc_llm}
\end{table*}

\clearpage
\clearpage

\end{document}